\numberwithin{equation}{section}
\begin{document}

\title{Analysis of Estimating the Bayes Rule for Gaussian Mixture Models with a Specified Missing-Data Mechanism
}

\titlerunning{Estimating the Bayes Rule under a Specified Missing-Data Mechanism}        

\author{Ziyang Lyu}


\institute{\\UNSW Data Since Hub, School of Mathematics and Statistics, University of New South
	Wales, Sydney, Australia\\
              \email{ziyang.lyu@unsw.edu.cn}           
}

\date{ }

\maketitle

\begin{abstract}
Semi-supervised learning (SSL) approaches have been successfully
applied in a wide range of engineering and scientific fields. This paper investigates the generative model framework with a missingness mechanism for unclassified observations, as introduced by \cite{ahfock2020apparent}.  
We show that in a partially classified sample, a classifier using Bayes’ rule of allocation with a missing-data mechanism can surpass a fully supervised classifier in a two-class normal homoscedastic model, especially with moderate to low overlap and proportion of missing class labels, or with large overlap but few missing labels.
It also outperforms a classifier with no missing-data mechanism regardless of
the overlap region or the proportion of missing class labels.
Our exploration of two- and three-component normal mixture models with unequal covariances through simulations further corroborates our findings. Finally,
we illustrate the use of the proposed classifier with a missing-data mechanism
on interneuronal and skin lesion datasets.
\keywords{Semi-supervised learning \and Optimal Bayes’ rule \and Missing at
	random \and Entropy \and Mixture model}
\end{abstract}

\section{Introduction\label{SEC:1}}
The field of machine learning has recently witnessed unprecedented success
in training extensive, deep neural networks, a feat largely attributable to the
availability of voluminous labeled datasets. However, producing these labels is often expensive and time-consuming, requiring specialized expertise. Semi-supervised learning (SSL), as detailed by \cite{chapelle2006semi}, presents a viable solution to this challenge.  A dataset suitable for SSL is composed of a labeled subset $\mathbb{D}^L=\left\{\left(\mathbf{y}_i, z_i\right) \mid i=1, \ldots, n_l\right\}$ and an unlabeled subset $\mathbb{D}^U=\left\{\mathbf{y}_{n+j} \mid j=1, \ldots, n_u\right\}$, where $\mathbf{y}_i$ represents a multi-dimensional observation, and $z_i$ denotes the class membership of $\mathbf{y}_i$. Principal approaches encompass generative models \cite{pan2006semi,kim2007texture,fujino2008semisupervised}, graph-based models \cite{blum1998combining,szummer2001partially,zhou2003learning}, and semi-supervised
support vector machines \cite{vapnik1998support,joachims1999transductive,lanckriet2004learning}.

Normal mixture models have become increasingly common in SSL. \cite{ahfock2023semi} provide an insightful literature review on SSL from a statistical perspective. 
Even though normal mixture models have been the subject of numerous
studies, such as those by \cite{kim2007texture,pan2006semi,come2009learning,szczurek2010introducing,huang2010semi}, these
works often make the critical assumption that the missing label process can
be overlooked for likelihood-based inference \cite{mclachlan1977estimating,chawla2005learning}. In contrast, our approach, following \cite{ahfock2020apparent}, treats the labels of unclassified observations as missing data and
proposes a framework for their missingness, drawing from Rubin’s seminal work on missingness in incomplete data analysis \cite{rubin1976inference}.  Here, the probability of
label missingness depends on a logistic model with a covariance equivalent to
an entropy-based measure. This dependency in the missingness pattern can
be harnessed to provide additional information about the optimal classifier as
prescribed by Bayes’ rule.

Within this framework, \cite{ahfock2020apparent} performed an asymptotic analysis showing that, in  two-class normal homoscedastic models, partially classified samples can provide more information than fully labeled ones.  { 	\cite{lyu2023gmmsslm} developed an R package focused on Gaussian mixture models incorporating a mechanism for missing data.}
However, their research did not clarify how to realize this effect nor did it expand to broader scenarios.
Consequently, our research seeks to identify the conditions where a classifier, based on a finite normal mixture model with a missing-data mechanism, surpasses a fully supervised classifier.  We build upon \cite{ahfock2020apparent}'s work to cover two- and three-class normal mixture models, assessed via simulation experiments. Moreover, we corroborate our results by analyzing two real-world datasets.

The structure of this paper is outlined as follows: Section~\ref{SEC:2} introduces the notation used in this paper. Section~\ref{SEC:3} details maximum likelihood estimation for completely and partially classified samples, considering both scenarios with and without the missing data mechanism. Section~\ref{SEC:4} examines the analysis of a two-class normal homoscedastic model. Simulation results for two- and three-class normal mixture models with unequal covariances are presented in Section~\ref{SEC:5}. Section~\ref{SEC:6} discusses the analysis of two datasets: the interneuronal and  skin lesion datasets. Finally, Section~\ref{SEC:7} provides a summary of our findings and concluding remarks.

\section{Notation\label{SEC:2}}
To define the notations, we let $\mathbf{y}_j$ denote the recorded $p$-dimensional observation, and we let $\mathbf{z}_j=\left(z_{1 j}, \ldots, z_{g j}\right)^T$ denote the corresponding $g$-dimensional vector of zero-one indicator variables defining the known class of origin of each, namely, $z_{i j}$ is equal to one if observation $\mathbf{y}_j$ belongs to class $i$ and is equal to zero otherwise, $(i=1, \ldots, g$ and $j=1, \ldots, n)$. We use $\mathbf{x}_{\mathrm{CC}}=\left(\mathbf{x}_1^T, \ldots, \mathbf{x}_n^T\right)^T$ with $\mathbf{x}_j=\left(\mathbf{y}_j^T, \mathbf{z}_j^T\right)^T$ to denote a set of completely classified samples. For a partially classified sample, in which some observations have no associated class labels, we let $m_j$ be the missing-label indicator for the observation $\mathbf{y}_j$, which is equal to 1 if $\mathbf{z}_j$ is missing and to zero if it is available $(j=1, \ldots, n)$. Accordingly, the partially classified sample $\mathbf{x}_{\mathrm{PC}}$ consists of those members $\mathbf{x}_j$ in $\mathbf{x}_{\mathrm{CC}}$ with $m_j=0$ and of only the observation $\mathbf{y}_j$ in $\mathbf{x}_{\mathrm{CC}}$ with $m_j=1$ (that is, the label $\mathbf{z}_j$ is missing). Henceforth, we will use the subscripts PC and CC to denote partially and completely classified samples, respectively.

A recorded $p$-dimensional observation $\mathbf{y}_j$ can arise from one of $g$ multivariate normal classes,
\begin{equation}\label{eq:2-1}
	\mathbf{Y}_j \sim N\left(\boldsymbol{\mu}_i, \boldsymbol{\Sigma}_i\right) \text { with probability } \pi_i \quad(i=1, \ldots, g ; j=1, \ldots, n) \text {, }
\end{equation}
where $\boldsymbol{\mu}_1, \ldots, \boldsymbol{\mu}_g$ denote the class means, and $\boldsymbol{\Sigma}_1, \ldots, \boldsymbol{\Sigma}_g$ denote the class covariance matrices. We denote the prior probabilities of the classes $C_1, \ldots, C_g$ by $\pi_1, \ldots, \pi_g$, respectively, where $1-\pi_g=\sum_{i=1}^{g-1} \pi_i$. We suppose that the density $f\left(\mathbf{y}_j\right)$ of $\mathbf{Y}_j$ can be written as
$$
f\left(\mathbf{y}_j\right)=\sum_{i=1}^g \pi_i f_i\left(\mathbf{y}_j ; \boldsymbol{\omega}_i\right),
$$
where $\boldsymbol{\omega}_i$ consists of the elements of $\boldsymbol{\mu}_i$ and the $\frac{1}{2} p(p+1)$ distinct elements of $\boldsymbol{\Sigma}_i$ $(i=1, \ldots, g)$. The vector $\boldsymbol{\theta}$ of all unknown parameters is given by $\boldsymbol{\theta}=$ $\left(\pi_1, \ldots, \pi_{g-1}, \boldsymbol{\omega}^T\right)^T$, where $\boldsymbol{\omega}=\left(\boldsymbol{\omega}_1^T, \ldots, \boldsymbol{\omega}_g^T\right)^T$. The classifier/Bayes' rule of allocation $R\left(\mathbf{y}_j ; \boldsymbol{\theta}\right)$ assigns an entity with observation $\mathbf{y}_j$ to class $C_k$ (that is, $R\left(\mathbf{y}_j ; \boldsymbol{\theta}\right)=k$, where $k$ represents the class membership) if
\begin{equation}\label{eq:2-2}
	k=\arg \max _i \tau_i\left(\mathbf{y}_j ; \boldsymbol{\theta}\right),
\end{equation}
where
\begin{equation}\label{eq:2-3}
	\tau_i\left(\mathbf{y}_j ; \boldsymbol{\theta}\right)=\frac{\pi_i f\left(\mathbf{y}_j ; \boldsymbol{\omega}_i\right)}{\sum_{h=1}^g \pi_h f\left(\mathbf{y}_j ; \boldsymbol{\omega}_i\right)}
\end{equation}
is the posterior probability that $\mathbf{Y}_j$ belongs to $C_i$ given $\mathbf{Y}_j=\mathbf{y}_j(i=1, \ldots, g$ and $j=1, \ldots, n)$.

The optimal (overall) error rate of the Bayes rule for a $g$-class normal
mixture model is given by
\begin{equation}\label{eq:2-4}
	\operatorname{err}(\mathbf{y} ; \boldsymbol{\theta})=1-\sum_{i=1}^g \pi_i \operatorname{Pr}\left\{R(\mathbf{y} ; \boldsymbol{\theta})=i \mid \mathbf{Z} \in C_i\right\}.
\end{equation}
The distribution of $R\left(\mathbf{y}_j ; \boldsymbol{\theta}\right)$  is very complicated, and manageable analytical
expressions were initially obtained only in special cases, such as those addressed by \cite{gilbert1969effect,han1969distribution,mclachlan1975some,hawkins1982extension}. Here, we write
\begin{equation}\label{eq:2-5}
	\operatorname{Pr}\left\{R(\mathbf{Y} ; \boldsymbol{\theta}) \in C_i \mid \mathbf{Z} \in C_i\right\}=\frac{\sum_{j=1}^n I_{C_i}\left(z_j\right) Q\left[z_j, R\left(\mathbf{y}_j ; \boldsymbol{\theta}\right)\right]}{\sum_{j=1}^n I_{C_i}\left(z_j\right)},
\end{equation}
where $Q[u, v]=1$ if $u=v$ and $Q[u, v]=0$ otherwise, and $I_{C_i}\left(z_j\right)$ is an indicator function for the $i$ th class.

\section{Methodology\label{SEC:3}}
We consider the ML estimation of the vector $\boldsymbol{\theta}$ containing all unknown parameters in the parametric families (\ref{eq:2-1}). For a partially classified sample $\mathbf{x}_{\mathrm{PC}}$, we let
\begin{equation}\label{eq:3-1}
	L_{\mathrm{PC}}^{(\mathrm{ig})}\left(\boldsymbol{\theta} ; \mathbf{x}_{\mathrm{PC}}\right)=\log L_C(\boldsymbol{\theta})+\log L_{U C}(\boldsymbol{\theta})
\end{equation}
denote the log partial likelihood function for $\boldsymbol{\theta}$ formed by ignoring the missing-data mechanism for the missing-label indicators $m_j$, where
$$
\begin{aligned}
	&\log L_C(\boldsymbol{\theta})=\sum_{j=1}^n\left(1-m_j\right) \sum_{i=1}^g z_{i j} \log \left\{\pi_i f\left(\mathbf{y}_j ; \boldsymbol{\omega}_i\right)\right\},\\
	\text{and} \quad&\log L_{U C}=\sum_{j=1}^n m_j \log \left\{\sum_{i=1}^g \pi_i f\left(\mathbf{y}_j ; \boldsymbol{\omega}_i\right)\right\},
\end{aligned}
$$
denote the $\log$ functions for $\boldsymbol{\theta}$ formed on the basis of the classified data and unclassified data, respectively. The $\log$ of the likelihood $L_{\mathrm{CC}}\left(\boldsymbol{\theta} ; \mathbf{x}_{\mathrm{CC}}\right)$ for the completely classified sample $\mathbf{x}_{\mathrm{CC}}$ is given by $\log L_C(\boldsymbol{\theta})$ with all $m_j=0$.
\begin{equation}\label{eq:3-2}
	\log L_{\mathrm{CC}}\left(\boldsymbol{\theta} ; \mathbf{x}_{\mathrm{CC}}\right)=\sum_{i=1}^g \sum_{j=1}^n z_{i j} \log \left\{\pi_i f\left(\mathbf{y}_j ; \boldsymbol{\omega}_i\right)\right\}.
\end{equation}

\subsection{Missing data mechanism}
 {In this study, we set aside the missing-data mechanism for likelihood inference in cases where missing labels are considered missing completely at random (MCAR), as defined in  Rubin (1976) . For an in-depth understanding of MCAR and its broader counterpart, missing at random (MAR), see \cite{mealli2015clarifying}. Under MCAR, the absence of labels is unrelated to both features and labels. In contrast, MAR allows for missingness dependent on features, but not on labels.
	As noted by \cite{mclachlan1989mixture}, in specific MAR instances, such as truncated features, ignoring missing data is justifiable. However, in our MAR context, disregarding missing data would be inappropriate. In fact, incorporating missing data can enhance the effectiveness of the classifier we aim to develop.
}

We introduce a random variable $M_j$ corresponding to the realized value $m_j$ of the missing-label indicator for the observation $\mathbf{y}_j$. The missing-data mechanism of \cite{rubin1976inference} is specified in the present context as
\begin{equation}\label{eq:3-3}
	\operatorname{Pr}\left\{M_j=1 \mid \mathbf{y}_j, \mathbf{z}_j\right\}=\operatorname{Pr}\left\{M_j=1 \mid \mathbf{y}_j\right\}=q\left(\mathbf{y}_j ; \boldsymbol{\theta}, \boldsymbol{\xi}\right) \quad(j=1, \ldots, n),
\end{equation}
where $\boldsymbol{\xi}=\left(\xi_0, \xi_1\right)^T$ is distinct from $\boldsymbol{\theta}$. The conditional probability $q\left(\mathbf{y}_j ; \boldsymbol{\theta}, \boldsymbol{\xi}\right)$ is taken to be a logistic function of the Shannon entropy $e n_j\left(\mathbf{y}_j ; \boldsymbol{\theta}\right)$, yielding
\begin{equation}\label{eq:3-4}
	q\left(\mathbf{y}_j ; \boldsymbol{\theta}, \boldsymbol{\xi}\right)=\frac{\exp \left\{\xi_0+\xi_1 \log e n_j\left(\mathbf{y}_j ; \boldsymbol{\theta}\right)\right\}}{1+\exp \left\{\xi_0+\xi_1 \log e n_j\left(\mathbf{y}_j ; \boldsymbol{\theta}\right)\right\}},
\end{equation}
where $e n_j\left(\mathbf{y}_j ; \boldsymbol{\theta}\right)=-\sum_{i=1}^g \tau_i\left(\mathbf{y}_j ; \boldsymbol{\theta}\right) \log \tau_i\left(\mathbf{y}_j ; \boldsymbol{\theta}\right)$. We use $\gamma$ to denote the expected proportion of missing class labels in the partially classified sample given by
\begin{equation}\label{eq:3-5}
	\gamma=\sum_{j=1}^n \mathrm{E}\left(M_j\right) / n=\mathrm{E}\left[\operatorname{Pr}\left\{M_j=1 \mid \mathbf{y}_j\right\}\right]=\mathrm{E}\{q(\mathbf{Y} ; \boldsymbol{\theta}, \boldsymbol{\xi})\}.
\end{equation}

We let $\boldsymbol{\Psi}=(\boldsymbol{\theta}^T, \boldsymbol{\xi}^T)^T$ be the vector of all unknown parameters. To construct the full likelihood function $L_{\mathrm{PC}}^{(\mathrm{full})}(\boldsymbol{\Psi})$ from the partially classified sample $\mathbf{x}_{\mathrm{PC}}$, we need expressions 
$$
f\left(\mathbf{y}_j, \mathbf{z}_j, m_j=0\right) \text { and } \quad f\left(\mathbf{y}_j, m_j=1\right),
$$
for the classified and unclassified observation $y_j$, respectively. Now, for a classified observation $\mathbf{y}_j$, it follows that
\begin{equation}\label{eq:3-6}
	\begin{aligned}
		f\left(\mathbf{y}_j, \mathbf{z}_j, m_j=0\right) & =f\left(\mathbf{z}_j\right) f\left(\mathbf{y}_y \mid \mathbf{z}_j\right) \operatorname{Pr}\left\{M_j=0 \mid \mathbf{y}_j, \mathbf{z}_j\right\} \\
		& =\prod_{i=1}^g\left\{\pi_i f_i\left(\mathbf{y}_j ; \boldsymbol{\omega}_i\right)\right\}^{z_{i j}}\left\{1-q\left(\mathbf{y}_j ; \boldsymbol{\theta}, \boldsymbol{\xi}\right)\right\},
	\end{aligned}
\end{equation}
while for an unclassified observation $\mathbf{y}_j$, we have that
\begin{equation}\label{eq:3-7}
	f\left(\mathbf{y}_j, m_j\!=\!1\right)\!=\!f\left(\mathbf{y}_j\right) \operatorname{Pr}\left\{M_j=1 \!\mid\! \mathbf{y}_j\right\}=\{\sum_{i=1}^g \pi_i f_i\left(\mathbf{y}_j ; \boldsymbol{\omega}_i\right)\} q\left(\mathbf{y}_j ; \boldsymbol{\theta}, \boldsymbol{\xi}\right).
\end{equation}
Then, the full likelihood function can be expressed as
\begin{equation}\label{eq:3-8}
	L_{\mathrm{PC}}^{(\mathrm{full})}(\boldsymbol{\Psi})=L_{\mathrm{PC}}\left(\boldsymbol{\theta}, \boldsymbol{\xi} ; \mathbf{x}_{\mathrm{PC}}\right)=\prod_{j=1}^n f^{\left(1-m_j\right)}\left(\mathbf{y}_j, \mathbf{z}_j, m_j=0\right) f^{m_j}\left(\mathbf{y}_j, m_j=1\right).
\end{equation}
Thus, we can write the full $\log$-likelihood function $\log L_{\mathrm{PC}}^{(\mathrm{full})}(\boldsymbol{\Psi})$ as 
\begin{equation}\label{eq:3-9}\log L_{\mathrm{PC}}^{(\mathrm{full})}\left(\boldsymbol{\Psi} ; \mathbf{x}_{\mathrm{PC}}\right)=\log L_{\mathrm{PC}}^{(\mathrm{ig})}\left(\boldsymbol{\theta} ; \mathbf{x}_{\mathrm{PC}}\right)+\log L_{\mathrm{PC}}^{(\mathrm{miss})}\left(\boldsymbol{\theta}, \boldsymbol{\xi} ; \mathbf{x}_{\mathrm{PC}}\right), \end{equation}where
\begin{equation}\label{eq:3-10}
	\log L_{\mathrm{PC}}^{(\mathrm{miss})}\left(\boldsymbol{\theta}, \boldsymbol{\xi} ; \mathbf{x}_{\mathrm{PC}}\right)=\sum_{j=1}^n\left[\left(1-m_j\right) \log \left\{1-q\left(\mathbf{y}_j ; \boldsymbol{\theta}, \boldsymbol{\xi}\right)\right\}+m_j \log q\left(\mathbf{y}_j ; \boldsymbol{\theta}, \boldsymbol{\xi}\right)\right]
\end{equation}
is the log-likelihood function formed on the basis of the missing-label indicator $m_j$.

We let $\boldsymbol{\theta}_{\mathrm{CC}}$,  $\boldsymbol{\theta}_{\mathrm{PC}}^{(\mathrm{ig})}$ and $\boldsymbol{\theta}_{\mathrm{PC}}^{(\mathrm{full})}$ be the estimates of $\boldsymbol{\theta}$ formed by consideration of the likelihood $L_{\mathrm{CC}}\left(\boldsymbol{\theta} ; \mathbf{x}_{\mathrm{CC}}\right)$,  $L_{\mathrm{PC}}^{(\mathrm{ig})}\left(\boldsymbol{\theta} ; \mathbf{x}_{\mathrm{PC}}\right)$ and $L_{\mathrm{PC}}^{(\mathrm{full})}\left(\boldsymbol{\theta}, \boldsymbol{\xi} ; \mathbf{x}_{\mathrm{PC}}\right)$, respectively. Similarly, we let $R(\hat{\boldsymbol{\theta}}_{\mathrm{CC}}), R(\hat{\boldsymbol{\theta}}_{\mathrm{PC}}^{(\mathrm{ig})})$ and $R(\hat{\boldsymbol{\theta}}_{\mathrm{PC}}^{(\mathrm{full})})$ denote the estimated Bayes' rule obtained by plugging in the estimates $\hat{\boldsymbol{\theta}}_{\mathrm{CC}}, \hat{\boldsymbol{\theta}}_{\mathrm{PC}}^{(\mathrm{ig})}$ and $\hat{\boldsymbol{\theta}}_{\mathrm{PC}}^{\text {(full)}}$, respectively. Henceforth, we abbreviate the above notation as $R$ with superscripts (ig) and (full) and subscripts CC and PC where appropriate.

\section{Two-class normal homoscedastic model\label{SEC:4}}
Given the complexity of the optimal error rate expression, providing a theoretical analysis of likelihood inference with a missing-data mechanism is challenging in a general case.
Thus, we study a particular case of $g$=2 classes under the assumption of equal-covariance $\left(\boldsymbol{\Sigma}_1=\boldsymbol{\Sigma}_2=\boldsymbol{\Sigma}\right)$. To simplify the numerical computations, \cite{ahfock2020apparent} gave a Taylor series approximation of the $\log$ entropy $\log \left\{e n_j\left(\mathbf{y}_j ; \boldsymbol{\theta}\right)\right\}$, shown as
$$
\log \left\{e n_j\left(\mathbf{y}_j ; \boldsymbol{\theta}\right)\right\}=-\left[\log \{\log (2)\}+d^2\left(\mathbf{y}_j ; \boldsymbol{\beta}\right) /\{8 \log (2)\}\right]+O\left(d^4\left(\mathbf{y}_j ; \boldsymbol{\beta}\right)\right),
$$
where
\begin{equation}\label{eq:4-1}
	d\left(\mathbf{y}_j ; \boldsymbol{\beta}\right)=\beta_0+\boldsymbol{\beta}_1^T \mathbf{y}_j,
\end{equation}
with
\begin{equation}\label{eq:4-2}
	\beta_0=\log \left(\pi_1 / \pi_2\right)-\frac{1}{2}\left(\boldsymbol{\mu}_1+\boldsymbol{\mu}_2\right)^T \boldsymbol{\Sigma}^{-1}\left(\boldsymbol{\mu}_1-\boldsymbol{\mu}_2\right) \quad \text { and } \quad \boldsymbol{\beta}_1=\boldsymbol{\Sigma}^{-1}\left(\boldsymbol{\mu}_1-\boldsymbol{\mu}_2\right).
\end{equation}

This expression provides a linear relationship between the negative log entropy $\log \left\{e n_j\left(\mathbf{y}_j ; \boldsymbol{\theta}\right)\right\}$ and the square of the discriminant function $d^2\left(\mathbf{y}_j ; \boldsymbol{\beta}\right)$, neither of which is large. Therefore, the negative log entropy in the conditional
probability (\ref{eq:3-4}), can be replaced by the square of the discriminant function
(\ref{eq:4-1}),
\begin{equation}\label{eq:4-3}
	q\left(\mathbf{y}_j ; \boldsymbol{\beta}, \boldsymbol{\xi}\right)=\frac{\exp \left\{\xi_0+\xi_1 d^2\left(\mathbf{y}_j ; \boldsymbol{\beta}\right)\right\}}{1+\exp \left\{\xi_0+\xi_1 d^2\left(\mathbf{y}_j ; \boldsymbol{\beta}\right)\right\}}.
\end{equation}

In convenient canonical form, let $\boldsymbol{\mu}_1=-\boldsymbol{\mu}_2=(\Delta / 2,0, \ldots, 0)^T$ and $\boldsymbol{\Sigma}=\mathbf{I}_p$, where $\Delta=[\left(\boldsymbol{\mu}_1-\boldsymbol{\mu}_2\right)^T \boldsymbol{\Sigma}^{-1}\left(\boldsymbol{\mu}_1-\boldsymbol{\mu}_2\right)]^{1 / 2}$ is the Mahalanobis distance between the two classes. Therefore, $\beta_0=\log \left(\pi_1 / \pi_2\right)$ and $\boldsymbol{\beta}_1=(\Delta, 0, \ldots, 0)^T$ in the discriminant function (\ref{eq:4-1}) .

\begin{lemma}\label{lem1}
	Given a two-class normal homoscedastic model in the canonical
	form in the same case of equal prior probabilities $\pi_1=\pi_2$, the entropy of
	observation $\mathbf{y}$ increases as the squared Mahalanobis distance decreases (equivalently the overlap region becomes smaller).
\end{lemma}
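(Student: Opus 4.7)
My plan is to collapse the entropy down to a function of a single scalar argument and then differentiate. Substituting the canonical form $\boldsymbol{\mu}_1=-\boldsymbol{\mu}_2=(\Delta/2,0,\ldots,0)^T$, $\boldsymbol{\Sigma}=\mathbf{I}_p$ and $\pi_1=\pi_2=1/2$ into the ratio $f_1(\mathbf{y};\boldsymbol{\omega}_1)/f_2(\mathbf{y};\boldsymbol{\omega}_2)$, the terms $\|\boldsymbol{\mu}_1\|^2$ and $\|\boldsymbol{\mu}_2\|^2$ cancel and the ratio simplifies to $\exp(\Delta y_1)$, where $y_1$ denotes the first coordinate of $\mathbf{y}$. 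Plugging this into \eqref{eq:2-3} yields $\tau_1(\mathbf{y};\boldsymbol{\theta})=\sigma(\Delta y_1)$ and $\tau_2(\mathbf{y};\boldsymbol{\theta})=1-\sigma(\Delta y_1)$, with $\sigma(u)=1/(1+e^{-u})$ the logistic sigmoid. Consequently,
\[
en(\mathbf{y};\boldsymbol{\theta}) = H(\sigma(\Delta y_1)), \qquad H(p) := -p\log p -(1-p)\log(1-p),
\]
so the entropy depends on $\mathbf{y}$ only through the scalar $t := \Delta y_1$.

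Next I would show that $t\mapsto H(\sigma(t))$ is strictly decreasing in $|t|$. Using the logit identity $\log\{\sigma(t)/(1-\sigma(t))\}=t$, together with $\sigma'(t)=\sigma(t)(1-\sigma(t))$ and $H'(p)=\log\{(1-p)/p\}$, a single application of the chain rule gives
\[
\frac{d}{dt}H(\sigma(t)) = -t\,\sigma(t)(1-\sigma(t)),
\]
which is strictly negative for $t>0$, zero at $t=0$, and strictly positive for $t<0$. Hence $H\circ\sigma$ attains its maximum value $\log 2$ at $t=0$ and is a strictly decreasing function of $t^2$.

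Applying this with $t=\Delta y_1$, so $t^2=\Delta^2 y_1^2$: for any fixed $\mathbf{y}$ with $y_1\neq 0$, $en(\mathbf{y};\boldsymbol{\theta})$ is a strictly decreasing function of $\Delta^2$, and therefore strictly increases as the squared Mahalanobis distance decreases; for $y_1=0$ the entropy is constantly $\log 2$, consistent with the claim in its weak form. The main obstacle is essentially algebraic rather than conceptual: one must use the equal-prior, equal-covariance and mean-symmetric assumptions in concert to collapse $\tau_i$ into a one-parameter sigmoid, after which the monotonicity statement reduces to a single derivative and the well-known symmetry of the binary entropy about $p=1/2$.
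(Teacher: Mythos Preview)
Your proposal is correct and follows essentially the same route as the paper: reduce the posterior probabilities to $\sigma(\Delta y_1)$ in canonical form, express the entropy as a function of the scalar $t=\Delta y_1$, and differentiate to obtain $-t\,\sigma(t)(1-\sigma(t))$, whence monotonicity in $\Delta$ (equivalently $\Delta^2$) for fixed $\mathbf{y}$. Your handling of the boundary case $y_1=0$ and your framing via the binary entropy $H$ are slightly more explicit than the paper's, but the argument is the same.
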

\begin{proof}
	In the canonical form, the discriminant function can be simplified as $d(\mathbf{y}, \Delta)=\Delta y_1$, where $y_1$ is the first element of $\mathbf{y}$. Under the two-class normal homoscedastic model, we can write the posterior probabilities as $\tau_1(\mathbf{y} ; \Delta)=$ $\exp \left(\Delta y_1\right) /\left\{1+\exp \left(\Delta y_1\right)\right\}$. Then, we obtain the entropy by
	$$
	e n(\mathbf{y} ; \Delta)=\log \left\{1+\exp \left(\Delta y_1\right)\right\}-\frac{\Delta y_1 \exp \left(\Delta y_1\right)}{1+\exp \left(\Delta y_1\right)} .
	$$
	Since the first derivative of $e n(\mathbf{y} ; \Delta)$ is given by
	$$
	e n^{\prime}(\mathbf{y} ; \Delta)=-\frac{\Delta y_1^2}{\left\{1+\exp \left(\left(\Delta y_1\right)\right)\right\}^2}<0,
	$$
	$e n(\mathbf{y} ; \Delta)$ is a monotonically decreasing function of $\Delta$.
\end{proof}
Since $\boldsymbol{\theta}$ corresponds to $\boldsymbol{\beta}$ as defined in (\ref{eq:4-2}), Bayes' rule $R(\hat{\boldsymbol{\theta}})$ is redefined as $R(\hat{\boldsymbol{\beta}})$ in terms of the estimates $\hat{\boldsymbol{\beta}}$. Likewise, the true error rate is also redefined as $\operatorname{err}(\boldsymbol{\beta})$.
	The asymptotic relative efficiencies (AREs) comparing the rule $R(\hat{\boldsymbol{\beta}}_{\mathrm{PC}}^{(\text {full) }})$ with $R(\hat{\boldsymbol{\beta}}_{\mathrm{CC}})$ and $R(\hat{\boldsymbol{\theta}}_{\mathrm{PC}}^{\mathrm{(ig})})$ are defined as
	$$
	\begin{aligned}
		&\operatorname{ARE}\left(\boldsymbol{\beta}_{\mathrm{PC}}^{(\mathrm{full})}, \boldsymbol{\beta}_{\mathrm{CC}}\right)=\frac{A E\left\{\operatorname{err}\left(\hat{\boldsymbol{\beta}}_{\mathrm{CC}}\right)\right\}-\operatorname{err}(\boldsymbol{\beta})}{A E\left\{\operatorname{err}\left(\hat{\boldsymbol{\beta}}_{\mathrm{PC}}^{(\mathrm{full})}\right)\right\}-\operatorname{err}(\boldsymbol{\beta})} \\
		\text { and }& \operatorname{ARE}\left(\boldsymbol{\beta}_{\mathrm{PC}}^{(\mathrm{full})}, \boldsymbol{\beta}_{\mathrm{PC}}^{(\mathrm{ig})}\right)=\frac{A E\left\{\operatorname{err}\left(\hat{\boldsymbol{\beta}}_{\mathrm{PC}}^{\text {(ig) }}\right)\right\}-\operatorname{err}(\boldsymbol{\beta})}{A E\left\{\operatorname{err}\left(\hat{\boldsymbol{\beta}}_{\mathrm{PC}}^{(\mathrm{full})}\right)\right\}-\operatorname{err}(\boldsymbol{\beta})},
	\end{aligned}
	$$
	respectively, where $A E\{\operatorname{err}(\hat{\boldsymbol{\beta}}_{\mathrm{CC}})\}, A E\{\operatorname{err}(\hat{\boldsymbol{\beta}}_{\mathrm{PC}}^{(\mathrm{ig})})\}$ and $A E\{\operatorname{err}(\hat{\boldsymbol{\beta}}_{\mathrm{PC}}^{(\mathrm{full})})\}$ denote the expansions of the expected error rates of $\mathrm{E}\{\operatorname{err}(\hat{\boldsymbol{\beta}}_{\mathrm{CC}})\}, \mathrm{E}\{\operatorname{err}(\hat{\boldsymbol{\beta}}_{\mathrm{PC}}^{(\mathrm{ig})})\}$ and $\mathrm{E}\{\operatorname{err}(\hat{\boldsymbol{\beta}}_{\mathrm{PC}}^{(\text {full })})\}$, respectively, up to and including terms of the first order in $1 / n$.
	
		Comparing the performance of Bayes' rule estimates shows the superiority of those from $\hat{\boldsymbol{\beta}}_{\mathrm{PC}}^{\text {(full)}}$ over $\hat{\boldsymbol{\beta}}_{\mathrm{CC}}$, especially when its ARE is 1 or greater.
	 Analogously, an enhanced performance can be observed in Bayes' rule estimations achieved using $\hat{\boldsymbol{\beta}}_{\mathrm{PC}}^{\text {(full)}}$ when compared with those using $\hat{\boldsymbol{\beta}}_{\mathrm{PC}}^{(\mathrm{ig})}$, under the condition that the ARE between $\boldsymbol{\beta}_{\mathrm{PC}}^{(\text {full)}}$ and $\boldsymbol{\beta}_{\mathrm{PC}}^{(\mathrm{ig})}$ is at least 1. As demonstrated by \cite{ahfock2020apparent} in their second theorem, the ARE of $\boldsymbol{\beta}_{\mathrm{PC}}^{(\text{full)}}$ and $\boldsymbol{\beta}_{\mathrm{CC}}$ was computed under the circumstance of equivalent prior probabilities $\pi_1=\pi_2$. This calculation depends on the variables $\Delta$ and $\boldsymbol{\xi}$.
	  Simulations further showed cases where the ARE of $\boldsymbol{\beta}_{\mathrm{PC}}^{(\mathrm{full})}$ to $\boldsymbol{\beta}_{\mathrm{CC}}$ was 1 or higher. This implies an improved performance of Bayes' rule estimations achieved via $\hat{\boldsymbol{\beta}}_{\mathrm{PC}}^{\text {(full)}}$ in comparison to those obtained through $\hat{\boldsymbol{\beta}}_{\mathrm{CC}}$. In the following, we elucidate the expression for the ARE between $\boldsymbol{\beta}_{\mathrm{PC}}^{\text {(full)}}$ and $\boldsymbol{\beta}_{\mathrm{PC}}^{(\mathrm{ig})}$ under the equivalent scenario of equal prior probabilities, namely $\pi_1=\pi_2$.

\begin{theorem}\label{thm1}
Under the missing-label model defined in \ref{eq:4-3}, the ARE of $R(\hat{\boldsymbol{\beta}}_{PC}^{(full)})$ compared to $R(\hat{\boldsymbol{\beta}}_{P C}^{(ig)})$ in the case of $\pi_1=\pi_2$ is given by

\begin{equation}\label{eq:4-4}
	ARE\left(\boldsymbol{\beta}_{PC}^{(full)}, \boldsymbol{\beta}_{PC}^{(ig)}\right)=\frac{\left(4+\Delta^2\right)-\gamma d_0+b_0}{\left(4+\Delta^2\right)^{-1}-c_0},
\end{equation}
for all $p$, where

$$
\begin{aligned}
	b_0 & =\int_{-\infty}^{\infty} 4 \xi_1^2 \Delta^2 y_1^2 q_1\left(y_1\right)\left(1-q\left(y_1\right)\right) f_{y_1}\left(y_1\right) d y_1, \\
	c_0 & =\int_{-\infty}^{\infty} \frac{\phi\left(y_1\right) \exp \left\{-\Delta^2 / 8\right\}}{1 / 2 \exp \left\{\Delta y_1 / 2\right\}+1 / 2 \exp \left\{-\Delta y_1 / 2\right\}} d y_1, \\
	d_0 & =\int_{-\infty}^{\infty} \tau_1\left(y_1\right) \tau_2\left(y_1\right) q_1\left(y_1\right) \gamma^{-1} f_{y_1}\left(y_1\right) d y_1,
\end{aligned}
$$
with

$$
\begin{aligned}
	& \tau_i\left(y_1\right)=\operatorname{Pr}\left(Z_i=1 \mid(\mathbf{Y})_1=y_1\right) \quad(i=1,2), \\
	& q_1\left(y_1 ; \Delta, \boldsymbol{\xi}\right)=\operatorname{Pr}\left\{M=1 \mid(\mathbf{Y})_1=y_1\right\}, \\
	& f_{y_1}\left(y_1 ; \Delta, \pi_1\right)=\pi_1 \phi\left(y_1 ; \Delta / 2,1\right)+\left(1-\pi_1\right) \phi\left(y_1 ;-\Delta / 2,1\right).
\end{aligned}
$$
\end{theorem}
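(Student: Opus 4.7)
The overall strategy is to derive (\ref{eq:4-4}) from the standard first-order asymptotic expansion of the expected excess error rate. Since $\boldsymbol{\beta}$ minimises $\operatorname{err}(\cdot)$, a second-order Taylor expansion yields, for any consistent estimator $\hat{\boldsymbol{\beta}}$ with asymptotic covariance $\Sigma_{\hat{\boldsymbol{\beta}}}/n$,
$$
AE\{\operatorname{err}(\hat{\boldsymbol{\beta}})\}-\operatorname{err}(\boldsymbol{\beta})=\frac{1}{2n}\operatorname{tr}\{H(\boldsymbol{\beta})\,\Sigma_{\hat{\boldsymbol{\beta}}}\}+o(1/n),
$$
where $H$ is the Hessian of $\operatorname{err}$ at the truth. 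Under the canonical form with $\pi_1=\pi_2$, I would first show that $H$ has a low-rank structure aligned with the $y_1$ direction, so that only a small number of entries of $\Sigma_{\hat{\boldsymbol{\beta}}}$ actually enter the computation. That reduces the ARE to a ratio of two scalar variances, one for $\hat{\boldsymbol{\beta}}_{PC}^{(ig)}$ and one for $\hat{\boldsymbol{\beta}}_{PC}^{(full)}$, and the computation of both variances becomes the main business of the proof.

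The next step is to assemble the two Fisher informations per observation. For the ignorable likelihood $L_{PC}^{(ig)}$, I would split the score into classified and unclassified parts, weighting with $1-q(\mathbf{y};\boldsymbol{\beta},\boldsymbol{\xi})$ and $q(\mathbf{y};\boldsymbol{\beta},\boldsymbol{\xi})$ respectively; under the canonical form, the classified part contributes the Fisher information whose inversion supplies the leading $4+\Delta^2$ factor, using the identity $\operatorname{E}(Y_1^2)=1+\Delta^2/4$ available when $\pi_1=\pi_2$, while the unclassified part produces the information-loss correction built from $\tau_1(Y_1)\tau_2(Y_1)$. After simplifying $\cosh^{-2}(\Delta y_1/2)$ via the representation $f_{y_1}(y_1)=\phi(y_1)\exp\{-\Delta^2/8\}\cosh(\Delta y_1/2)$, the latter correction collapses precisely to $c_0$. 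For the full likelihood $L_{PC}^{(full)}$ one adds the Fisher information of the missing-data mechanism (\ref{eq:4-3}); its score with respect to $\boldsymbol{\beta}$ is $\{m-q(\mathbf{y};\boldsymbol{\beta},\boldsymbol{\xi})\}\xi_1\nabla_{\boldsymbol{\beta}}d^2(\mathbf{y};\boldsymbol{\beta})$, and squaring and integrating against $f_{y_1}$ with $\operatorname{Var}(M\mid Y)=q(1-q)$ will produce the integral $b_0$ in the relevant entry of the information matrix.

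The main obstacle will be handling the block structure for the full likelihood, since $q(\mathbf{y};\boldsymbol{\beta},\boldsymbol{\xi})$ couples $\boldsymbol{\beta}$ and $\boldsymbol{\xi}$ through $d^2(\mathbf{y};\boldsymbol{\beta})$ and hence the cross-information $\mathcal{I}_{\boldsymbol{\beta}\boldsymbol{\xi}}$ does not vanish. The relevant asymptotic variance of $\hat{\boldsymbol{\beta}}_{PC}^{(full)}$ is the upper-left block of
$$
\begin{pmatrix}\mathcal{I}_{\boldsymbol{\beta}\boldsymbol{\beta}} & \mathcal{I}_{\boldsymbol{\beta}\boldsymbol{\xi}}\\ \mathcal{I}_{\boldsymbol{\xi}\boldsymbol{\beta}} & \mathcal{I}_{\boldsymbol{\xi}\boldsymbol{\xi}}\end{pmatrix}^{-1},
$$
obtained through the Schur complement $\mathcal{I}_{\boldsymbol{\beta}\boldsymbol{\beta}}-\mathcal{I}_{\boldsymbol{\beta}\boldsymbol{\xi}}\mathcal{I}_{\boldsymbol{\xi}\boldsymbol{\xi}}^{-1}\mathcal{I}_{\boldsymbol{\xi}\boldsymbol{\beta}}$. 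Carrying this out, the correction $-\mathcal{I}_{\boldsymbol{\beta}\boldsymbol{\xi}}\mathcal{I}_{\boldsymbol{\xi}\boldsymbol{\xi}}^{-1}\mathcal{I}_{\boldsymbol{\xi}\boldsymbol{\beta}}$ should collapse to the $\gamma d_0$ term, with $\gamma=\operatorname{E}\{q(Y;\boldsymbol{\beta},\boldsymbol{\xi})\}$ emerging naturally from $\mathcal{I}_{\boldsymbol{\xi}\boldsymbol{\xi}}$. A secondary obstacle is verifying that the various integrals, once expressed in canonical coordinates and after exploiting the symmetry $f_{y_1}(y_1)=f_{y_1}(-y_1)$ (which holds because $\pi_1=\pi_2$) to eliminate off-diagonal contributions, match exactly the definitions of $b_0$, $c_0$, $d_0$ in the statement. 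Substituting the two resulting scalar variances into the ARE then produces the ratio in (\ref{eq:4-4}).
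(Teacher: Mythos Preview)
Your overall architecture---Efron's first–order expansion of the excess error, reduction under the canonical form with $\pi_1=\pi_2$ to a ratio of scalar Fisher–information entries, and identification of $b_0$ as the information contributed by the missing-data mechanism via the score $\{m-q\}\xi_1\nabla_{\boldsymbol\beta}d^2$---matches the paper's proof. The paper cites Efron (1975) for the expansion and then invokes the information decompositions of Ahfock and O'Neill rather than deriving the information matrices from scratch, but at that level your plan and the paper's proof are the same.

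Where your plan goes wrong is the origin of the $\gamma d_0$ term. You attribute it to the Schur complement $-\mathcal{I}_{\boldsymbol\beta\boldsymbol\xi}\mathcal{I}_{\boldsymbol\xi\boldsymbol\xi}^{-1}\mathcal{I}_{\boldsymbol\xi\boldsymbol\beta}$ arising from profiling out the nuisance parameter $\boldsymbol\xi$. That is not what $\gamma d_0$ is. Look at its integrand: $\tau_1(y_1)\tau_2(y_1)\,q_1(y_1)\,f_{y_1}(y_1)$ is the conditional variance $\operatorname{Var}(Z_1\mid Y_1)$ weighted by the probability that the label is missing---exactly the expected information about $\boldsymbol\beta$ lost through the unobserved class labels. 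In the paper's language it is the relevant diagonal entry of $\gamma\,\mathbf{I}_{\mathrm{CC}}^{(\mathrm{clr})}(\boldsymbol\beta)$ in the decomposition $\mathbf{I}_{\mathrm{PC}}^{(\mathrm{full})}=\mathbf{I}_{\mathrm{CC}}-\gamma\,\mathbf{I}_{\mathrm{CC}}^{(\mathrm{clr})}+\mathbf{I}_{\mathrm{PC}}^{(\mathrm{miss})}$; it has nothing to do with $\boldsymbol\xi$. A Schur–complement correction would instead involve squares of moments like $E\{q(1-q)\,\partial d^2/\partial\beta\}$ divided by $E\{q(1-q)\}$, which does not reduce to $E\{\tau_1\tau_2 q_1\}$. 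So the step ``the correction $-\mathcal{I}_{\boldsymbol\beta\boldsymbol\xi}\mathcal{I}_{\boldsymbol\xi\boldsymbol\xi}^{-1}\mathcal{I}_{\boldsymbol\xi\boldsymbol\beta}$ should collapse to the $\gamma d_0$ term'' would fail. To repair the argument you need to split the full-likelihood score for $\boldsymbol\beta$ into the label-information part (whose loss under missingness is $\gamma d_0$) and the mechanism part (which supplies $b_0$), following the Ahfock decomposition rather than a nuisance-parameter profiling.

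A smaller point: the $(4+\Delta^2)^{-1}$ term is the relevant entry of $\mathbf{I}_{\mathrm{CC}}(\boldsymbol\beta)$ for the \emph{completely} classified sample, and both $u_0$ and $a_0$ are built by subtracting a loss term from it ($\gamma d_0$ and $c_0$ respectively). Your description has the classified and unclassified pieces of $L_{\mathrm{PC}}^{(\mathrm{ig})}$ producing these separately, which conflates the CC baseline with the weighted classified contribution; the paper instead invokes O'Neill's decomposition $\mathbf{I}_{\mathrm{PC}}^{(\mathrm{ig})}=\mathbf{I}_{\mathrm{CC}}-\gamma\,\mathbf{I}_{\mathrm{CC}}^{(\mathrm{lr})}$ to read off $a_0=(4+\Delta^2)^{-1}-c_0$ directly.
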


\begin{proof}
	\cite{efron1975efficiency} gave the first order expansion of the expected excess error rate of the plug-in form of Bayes' rule using the estimator $\hat{\boldsymbol{\beta}}$ of $\boldsymbol{\beta}$, where $\sqrt{n}(\hat{\boldsymbol{\beta}}-\boldsymbol{\beta})$ converges in distribution to $N(0, \mathbf{V})$, as $n \rightarrow \infty$, and where $\mathbf{V}$ is a $(p+1) \times(p+1)$ variance matrix of $\sqrt{n}(\hat{\boldsymbol{\beta}}-\boldsymbol{\beta})$, the first and second order moments also converge. The expectation of the so-called excess error rate can be expanded as follows:
	
	$$
	\mathrm{E}\{\operatorname{err}(\hat{\boldsymbol{\beta}})\}-\operatorname{err}(\boldsymbol{\beta})=\frac{\pi_1 \phi\left(\Delta^* ; 0,1\right)}{2 \Delta n} w+o(1 / n),
	$$
	where $\Delta^*=\frac{1}{2} \Delta-\lambda / \Delta$ with $\lambda=\log \left(\pi_1 / \pi_2\right), \phi\left(y ; \mu, \sigma^2\right)$, denotes the normal density with mean $\mu$ and variance $\sigma^2$, and
	$$
	w=v_{00}-\frac{2 \lambda}{\Delta} v_{01}+\frac{\lambda^2}{\Delta^2} v_{11}+\sum_{i=2}^p v_{i i},
	$$
	with $v_{i j}$ representing the element in the $i$ th row and $j$ th column of $\mathbf{V}$.

\end{proof}

It has been shown \cite{ahfock2020apparent} that the information matrix $\mathbf{I}_{\mathrm{PC}}^{(\mathrm{full})}(\boldsymbol{\beta})$ can be decomposed as follow
$$
\mathbf{I}_{\mathrm{PC}}^{(\mathrm{full})}(\boldsymbol{\beta})=\mathbf{I}_{\mathrm{CC}}(\boldsymbol{\beta})-\gamma \mathbf{I}_{\mathrm{CC}}^{(\mathrm{clr})}(\boldsymbol{\beta})+\mathbf{I}_{\mathrm{PC}}^{(\mathrm{miss})}(\boldsymbol{\beta}),
$$
where $\gamma$ is the expected proportion of missing class labels in the partially classified sample, $\mathbf{I}_{\mathrm{CC}}(\boldsymbol{\beta})$ is the information about $\boldsymbol{\beta}$ contained in $\mathbf{x}_{\mathrm{CC}}, \mathbf{I}_{\mathrm{CC}}^{(\mathrm{clr})}(\boldsymbol{\beta})$ is the conditional information about $\boldsymbol{\beta}$ under the logistic regression model for the distribution of the class labels given $\mathbf{x}_{\mathrm{CC}}$, and $\mathbf{I}_{\mathrm{PC}}^{\text {(miss) }}(\boldsymbol{\beta})$ is the information about $\boldsymbol{\beta}$ contained in the missing-label indicators under the assumed logistic model for their distribution given $\mathbf{x}_{\mathrm{PC}}$. In the case of $\pi_1=\pi_2$, the information matrix is a diagonal matrix, $\mathbf{I}_{\mathrm{PC}}^{(\mathrm{full})}(\boldsymbol{\beta})=\operatorname{diag}\left\{u_0, u_1 . u_0, \ldots, u_0\right\}$; we have $\lambda=0$ and $w=v_{00}+\sum_{i=2}^p v_{i i}$. Since the asymptotic covariance matrix $\mathbf{V}$ is given by $n \times\{\mathbf{I}(\boldsymbol{\beta})\}^{-1}$, we obtain

$$
\mathrm{E}\left\{\operatorname{err}(\hat{\boldsymbol{\beta}}_{\mathrm{PC}}^{(\mathrm{full})})\right\}-\operatorname{err}(\boldsymbol{\beta})=\frac{\pi_1 p \phi(\Delta / 2 ; 0,1)}{2 \Delta u_0}+o(1 / n),
$$
where

$$
\begin{aligned}
	& u_0=\left(4+\Delta^2\right)^{-1}-\gamma d_0+b_0, \\
	& b_0=\int_{-\infty}^{\infty} 4 \xi_1^2\left(\Delta^2 y_1^2+2 \lambda \Delta+\lambda^2\right) q_1\left(y_1\right)\left(1-q\left(y_1\right)\right) f_{y_1}\left(y_1\right) d y_1, \\
	& d_0=\int_{-\infty}^{\infty} \tau_1\left(y_1\right) \tau_2\left(y_1\right) q_1\left(y_1\right) \gamma^{-1} f_{y_1}\left(y_1\right) d y_1 .
\end{aligned}
$$
\cite{o1978normal} assumed that the missingness of the labels did not depend on the data and showed that the information matrix $\mathbf{I}_{\mathrm{PC}}^{(\mathrm{ig})}(\boldsymbol{\beta})$ can be decomposed as follows
$$
\mathbf{I}_{\mathrm{PC}}^{(\mathrm{ig})}(\boldsymbol{\beta})=\mathbf{I}_{\mathrm{CC}}(\boldsymbol{\beta})-\gamma \mathbf{I}_{\mathrm{CC}}^{(\mathrm{lr})}(\boldsymbol{\beta}),
$$
where $\mathbf{I}_{\mathrm{CC}}^{(\mathrm{lr})}(\boldsymbol{\beta})$ is the information about $\boldsymbol{\beta}$ contained in the inverse of the asymptotic covariance matrix of the $\boldsymbol{\beta}$ obtained from the marginal likelihood. For $\pi_1=\pi_2, \mathbf{I}_{\mathrm{PC}}^{(\mathrm{ig})}(\boldsymbol{\beta})=\operatorname{diag}\left\{a_0, a_1, a_0, \ldots, a_0\right\}$. Similarly, we obtain
$$
\mathrm{E}\left\{\operatorname{err}(\hat{\boldsymbol{\beta}}_{\mathrm{PC}}^{(\mathrm{ig})})\right\}-\operatorname{err}(\boldsymbol{\beta})=\frac{\pi_1 p \phi(\Delta / 2 ; 0,1)}{2 \Delta a_0}+o(1 / n),
$$
where
$$
a_0=\left(4+\Delta^2\right)^{-1}-c_0, \quad \text { and } \quad c_0=\int_{-\infty}^{\infty} \frac{\phi\left(y_1\right) \exp \left\{-\Delta^2 / 8\right\}}{1 / 2 \exp \left\{\Delta y_1 / 2\right\}+1 / 2 \exp \left\{-\Delta y_1 / 2\right\}} d y_1.
$$
Our examination of the two-class normal homoscedastic model, even with equal prior probabilities $(\pi_1=\pi_2)$, unveils a substantial complexity in the expressions for $\operatorname{ARE}(\boldsymbol{\beta}_{\mathrm{PC}}^{\text {(full) }}, \boldsymbol{\beta}_{\mathrm{CC}})$ and $\operatorname{ARE}(\boldsymbol{\beta}_{\mathrm{PC}}^{(\mathrm{full})}, \boldsymbol{\beta}_{\mathrm{PC}}^{(\mathrm{ig})})$. Consequently, elucidating the relationship among $\operatorname{ARE}(R(\hat{\boldsymbol{\theta}}_{\mathrm{PC}}^{\text {(full) }})), \Delta$, and $\boldsymbol{\xi}$ remains an intricate task. In an attempt to demystify this complexity, we take the approach of fixing $\xi_0$ and $\xi_1$, then systematically exploring the range of $\Delta$ values that yield $\operatorname{ARE}(\boldsymbol{\beta}_{\mathrm{PC}}^{(\mathrm{full})}, \boldsymbol{\beta}_{\mathrm{CC}}) \geq 1$ and $\operatorname{ARE}(\boldsymbol{\beta}_{\mathrm{PC}}^{(\text {full })}, \boldsymbol{\beta}_{\mathrm{PC}}^{(\mathrm{ig})}) \geq 1$. Figures~\ref{fig:1} and \ref{fig:2} present the interrelation between $\mathrm{ARE}$ and $\Delta$ under assorted combinations of $\xi_0 \in\{0.1,0.5,1\}$ and $\xi_1 \in\{-0.1,-0.5,-1,-3,-5\}$, given the prior probability equalities, $\pi_1=\pi_2$. Even for the cases with unequal prior probabilities, specifically $\pi_1 \in\{0.1,0.2,0.3,0.4\}$, we observed trends that are analogous to those displayed in Figures~\ref{fig:1} and \ref{fig:2}. The findings from this investigation are elaborated in the supplementary.

As seen in Figure~\ref{fig:1}, as $\Delta$ increases under different combinations of $\xi_0$ and $\xi_1$, the value of $\operatorname{ARE}(\boldsymbol{\beta}_{\mathrm{PC}}^{\text {(full)}}, \boldsymbol{\beta}_{\mathrm{CC}})$ initially increases from 0 to a maximum and then decreases and converges to 1 . This behavior can be interpreted as follows
\begin{itemize}
	\item When the squared Mahalanobis distance between the two classes is too large (a small overlap region, e.g., Figure~\ref{fig:3}c), the value of $\operatorname{ARE}(\boldsymbol{\beta}_{\mathrm{PC}}^{(\mathrm{full})}, \boldsymbol{\beta}_{\mathrm{CC}})$ converges to 1, so there is no difference between the estimated Bayes' rules $R(\hat{\boldsymbol{\beta}}_{\mathrm{PC}}^{(\text {full })})$ and $R(\hat{\boldsymbol{\beta}}_{\mathrm{CC}})$.
	
	\item When the squared Mahalanobis distance between the two classes is too small (a large overlap region, e.g., Figure~\ref{fig:3}a), the value of $\mathrm{ARE}(\boldsymbol{\beta}_{\mathrm{PC}}^{(\mathrm{full})}, \boldsymbol{\beta}_{\mathrm{CC}})$ is smaller than 1 , indicating that $R(\hat{\boldsymbol{\beta}}_{\mathrm{PC}}^{(\text {full) }})$ performs worse than $R(\hat{\boldsymbol{\beta}}_{\mathrm{CC}})$;
	\item When the squared Mahalanobis distance between the two classes is moderate (a moderate overlap region, e.g., Figure~\ref{fig:3}b), the value of $\operatorname{ARE}(\boldsymbol{\beta}_{\mathrm{PC}}^{\text {(full) }}, \boldsymbol{\beta}_{\mathrm{CC}})$ is greater than 1, which means that $R(\hat{\boldsymbol{\beta}}_{\mathrm{PC}}^{(\mathrm{full})})$ performs better than $R(\hat{\boldsymbol{\beta}}_{\mathrm{CC}})$.
\end{itemize}
Moreover, when we focus on a partially classified sample with a large overlap region (such as $\Delta=0.25$ ), we find that when $\xi_0$ is fixed, the smaller the $\xi_1$ value is, the larger the ARE value will be, implying that when $\Delta$ is fixed, a reduction in the proportion of missing class labels will cause the error rate of $R(\hat{\boldsymbol{\theta}}_{\mathrm{PC}}^{(\mathrm{full})})$ to be close to or even better than that of $R(\hat{\boldsymbol{\beta}}_{\mathrm{CC}})$.

Figure~\ref{fig:2} shows that as $\Delta$ increases under different combinations of $\xi_0$ and $\xi_1$, the value of $\operatorname{ARE}(\boldsymbol{\beta}_{\mathrm{PC}}^{(\mathrm{full})}, \boldsymbol{\beta}_{\mathrm{PC}}^{(\mathrm{ig})})$ initially increases from 1 to a maximum and then decreases and converges to 1, indicating that the performance of $R(\boldsymbol{\beta}_{\mathrm{PC}}^{(\mathrm{full})})$ is no worse than that of $R(\boldsymbol{\beta}_{\mathrm{PC}}^{(\mathrm{ig})})$.

\begin{figure}[t]
	\includegraphics[width=\linewidth]{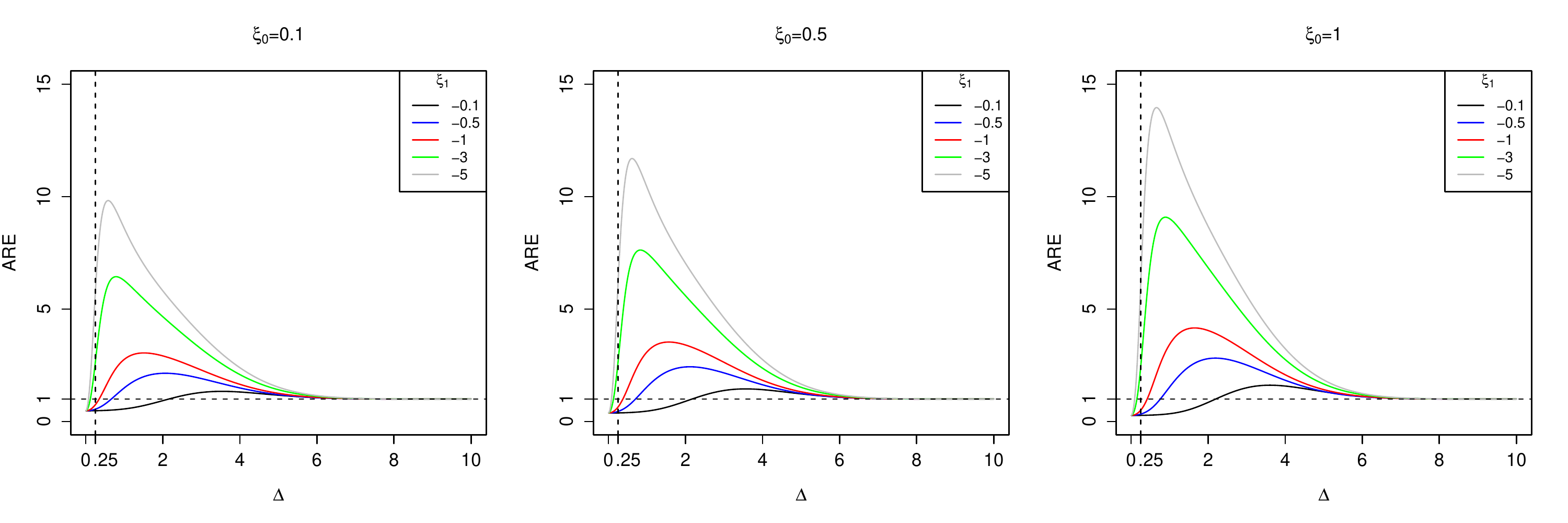}
	\caption{Plot of the asymptotic relative efficiency $\operatorname{ARE}(\boldsymbol{\beta}_{\mathrm{PC}}^{(\mathrm{full})}, \boldsymbol{\beta}_{\mathrm{CC}})$ versus the squared root of the squared Mahalanobis distance between the two classes, $\Delta$, for $\pi_1=\pi_2$.}
	\label{fig:1}
\end{figure}
\begin{figure}[t]
	\includegraphics[width=\linewidth]{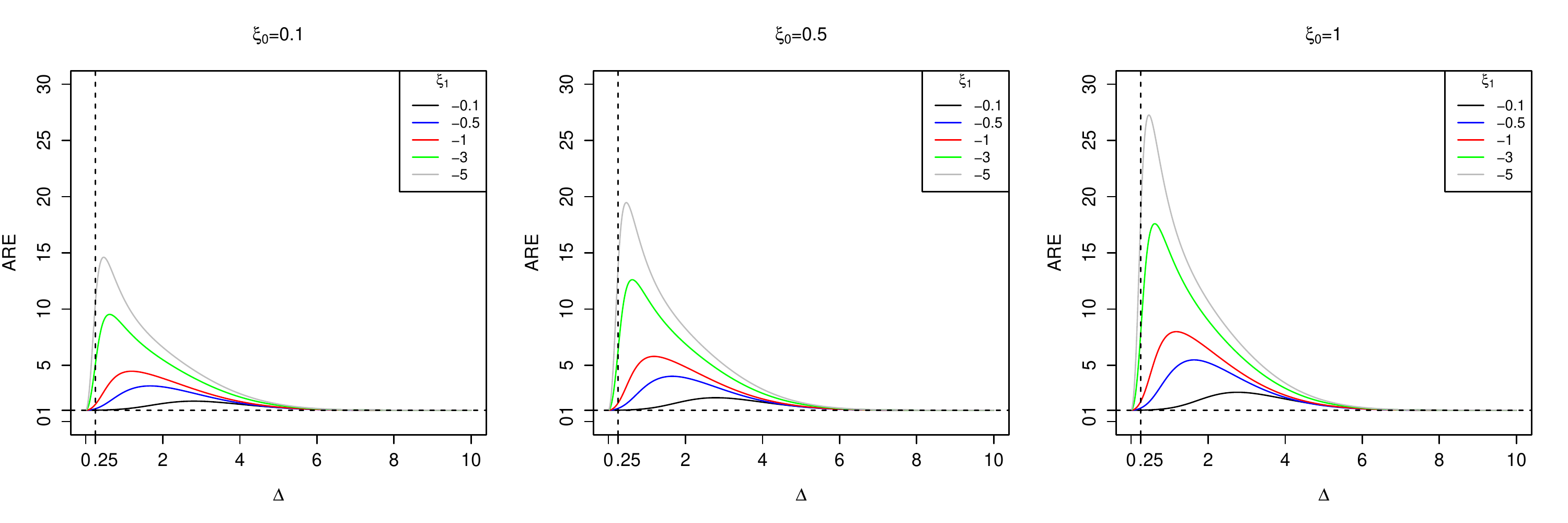}
	\caption{Plot of the asymptotic relative efficiency $\operatorname{ARE}(\boldsymbol{\beta}_{\mathrm{PC}}^{(\mathrm{full})}, \boldsymbol{\beta}_{\mathrm{PC}}^{(\mathrm{ig})})$ versus the squared root of the squared Mahalanobis distance between the two classes, $\Delta$, for $\pi_1=\pi_2$.}
	\label{fig:2}
\end{figure}

\begin{itemize}
	\item When the squared Mahalanobis distance between the two classes is too large or too small (small or large overlap region, e.g., Figures~\ref{fig:3}a and \ref{fig:3}c), the value of $\operatorname{ARE}(\boldsymbol{\beta}_{\mathrm{PC}}^{(\mathrm{full})}, \boldsymbol{\beta}_{\mathrm{PC}}^{(\mathrm{ig})})$ converges to 1, so there is no difference between the estimated Bayes' rules $R(\hat{\boldsymbol{\beta}}_{\mathrm{PC}}^{(\mathrm{full})})$ and $R(\hat{\boldsymbol{\beta}}_{\mathrm{PC}}^{(\mathrm{ig})})$. 
	\item When the squared Mahalanobis distance between the two classes is moderate, (a moderate overlap region, e.g., Figure~\ref{fig:3}b), the value of $\operatorname{ARE}(\boldsymbol{\beta}_{\mathrm{PC}}^{\text {(full) }}, \boldsymbol{\beta}_{\mathrm{PC}}^{(\mathrm{ig})})$ is greater than 1 , which means that $R(\hat{\boldsymbol{\beta}}_{\mathrm{PC}}^{(\mathrm{full})})$ performs better than $R(\hat{\boldsymbol{\beta}}_{\mathrm{PC}}^{(\mathrm{ig})})$.
\end{itemize}

Moreover, when we focus on a partially classified sample with a large overlap region (such as $\Delta=0.25$ ), we find that when $\xi_0$ is fixed, the smaller the $\xi_1$ value is, the larger the ARE value will be, implying that when $\Delta$ is fixed, a reduction in the proportion of missing class label will cause the error rate of $R(\hat{\boldsymbol{\theta}}_{\mathrm{PC}}^{(\mathrm{full})})$ to be much smaller than that of $R(\hat{\boldsymbol{\beta}}_{\mathrm{PC}}^{(\mathrm{ig})})$. 
\begin{figure}[h!]
	\subfigure[]{\includegraphics[width=0.3\linewidth]{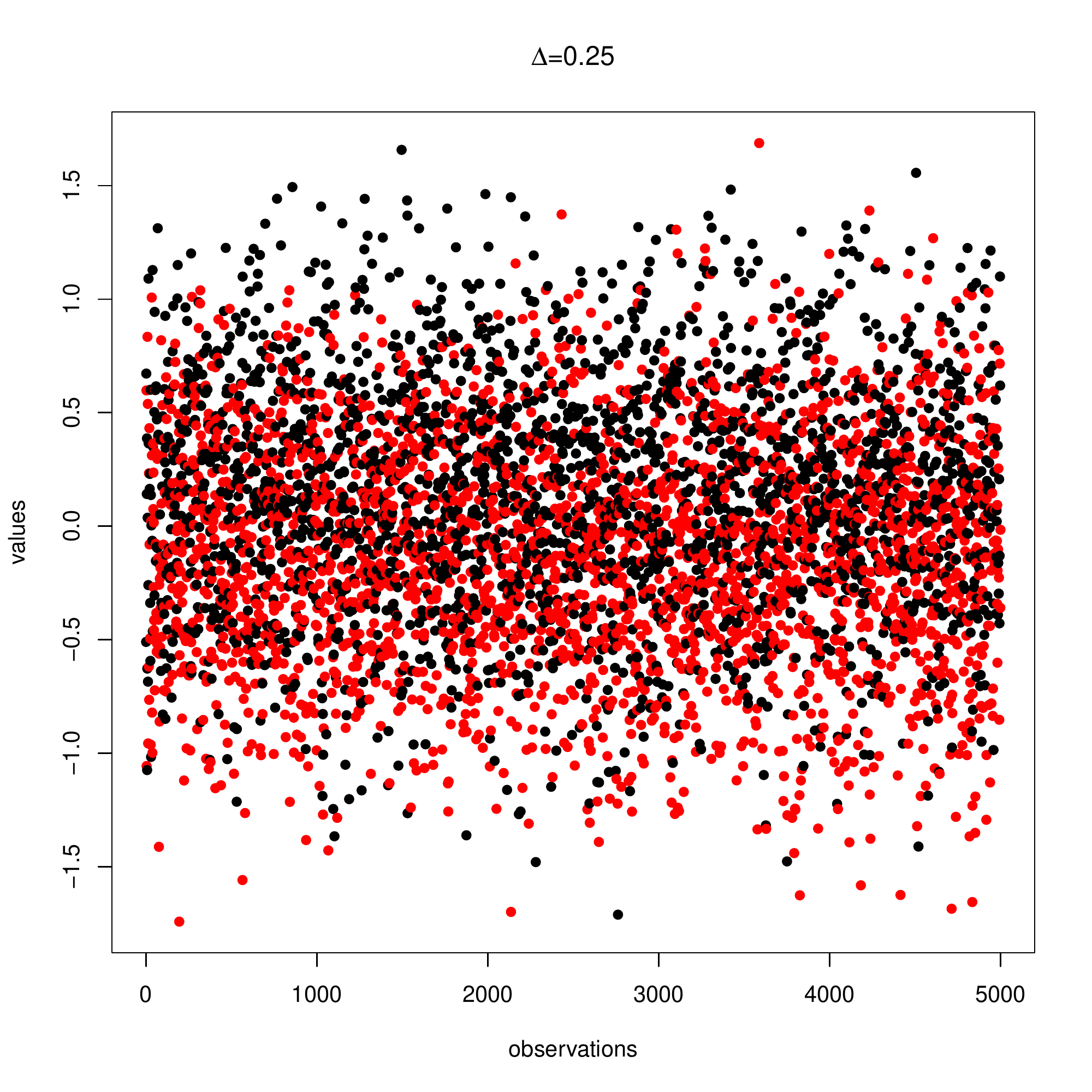}}
	\subfigure[]{\includegraphics[width=0.3\linewidth]{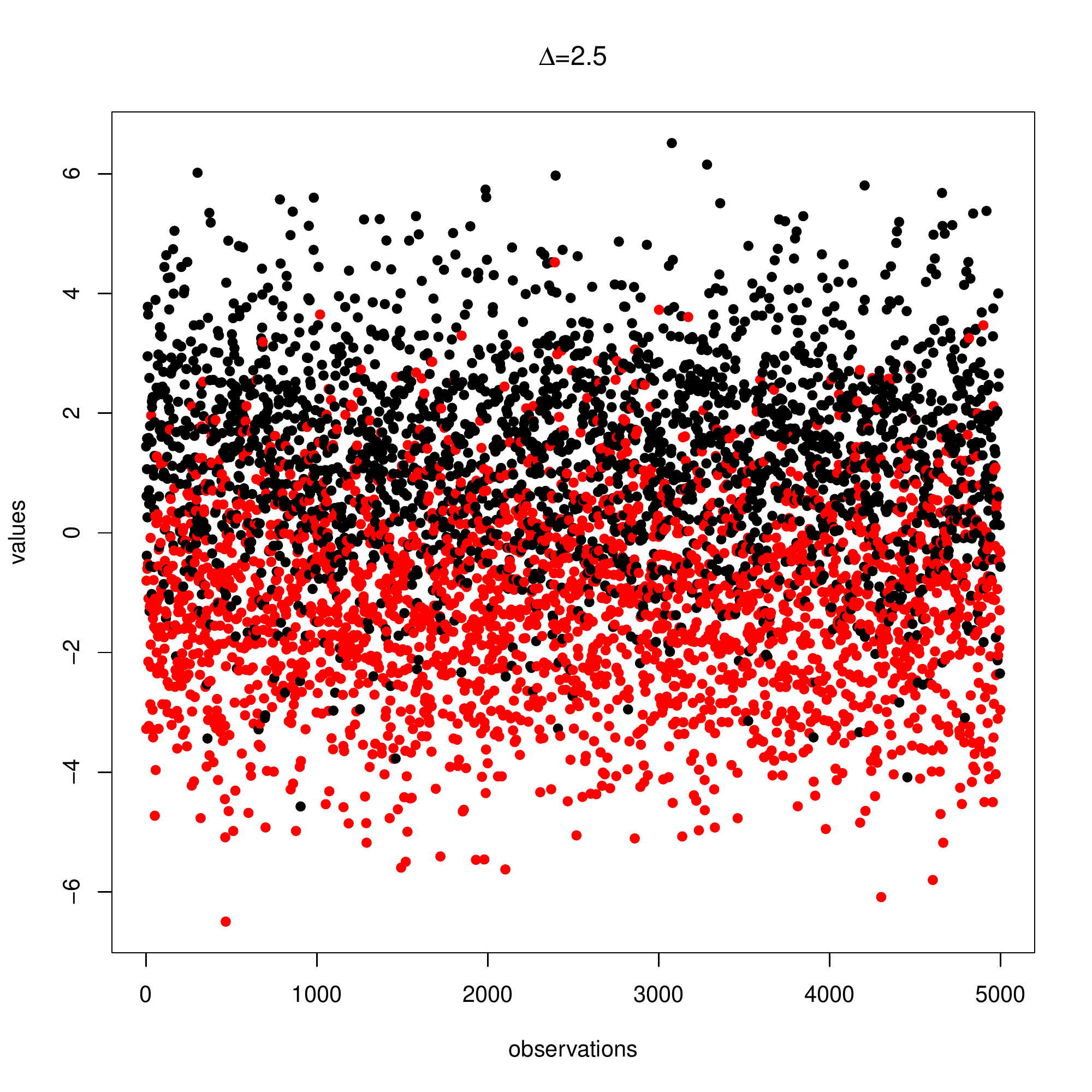}}
	\subfigure[]{\includegraphics[width=0.3\linewidth]{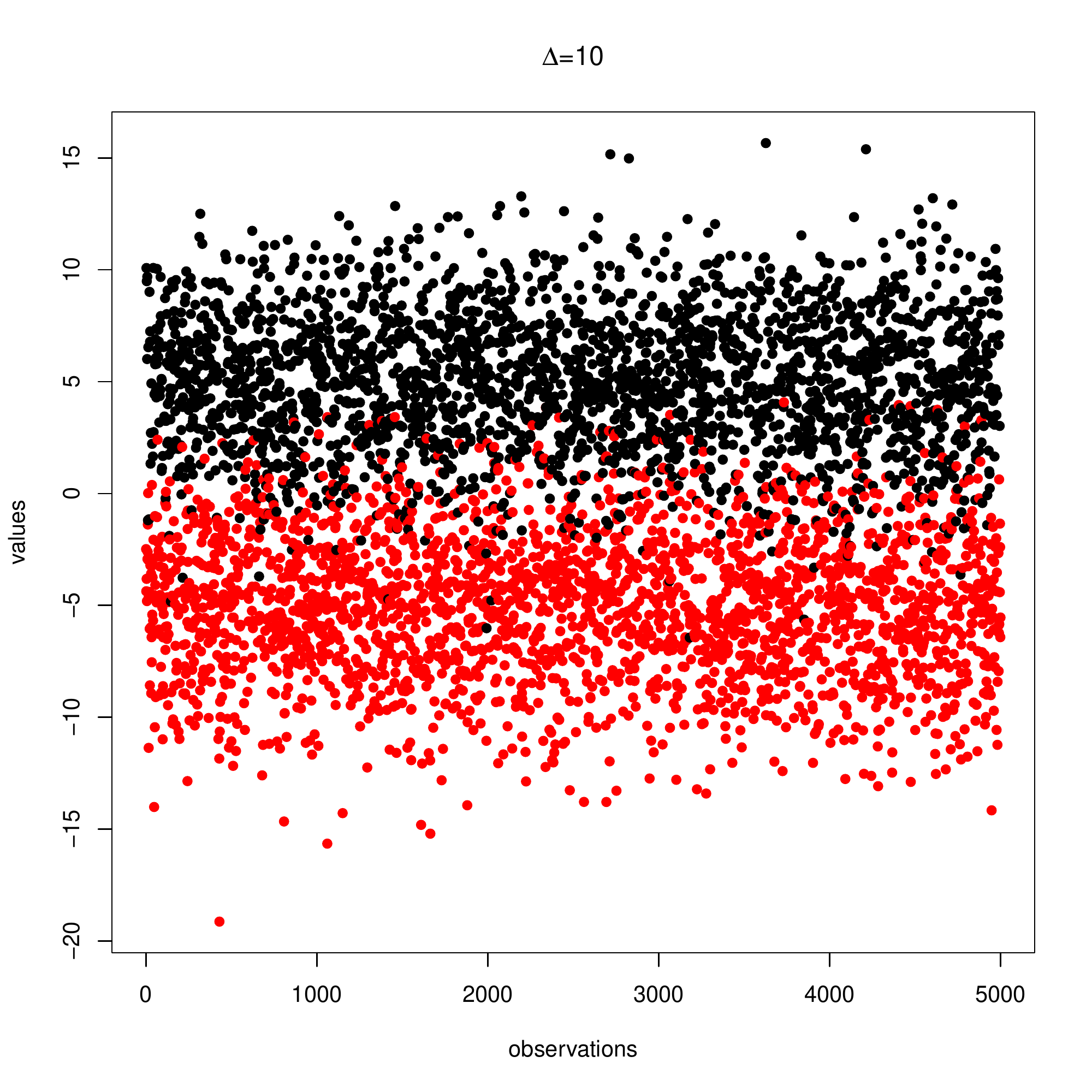}}
	\subfigure[]{\includegraphics[width=0.3\linewidth]{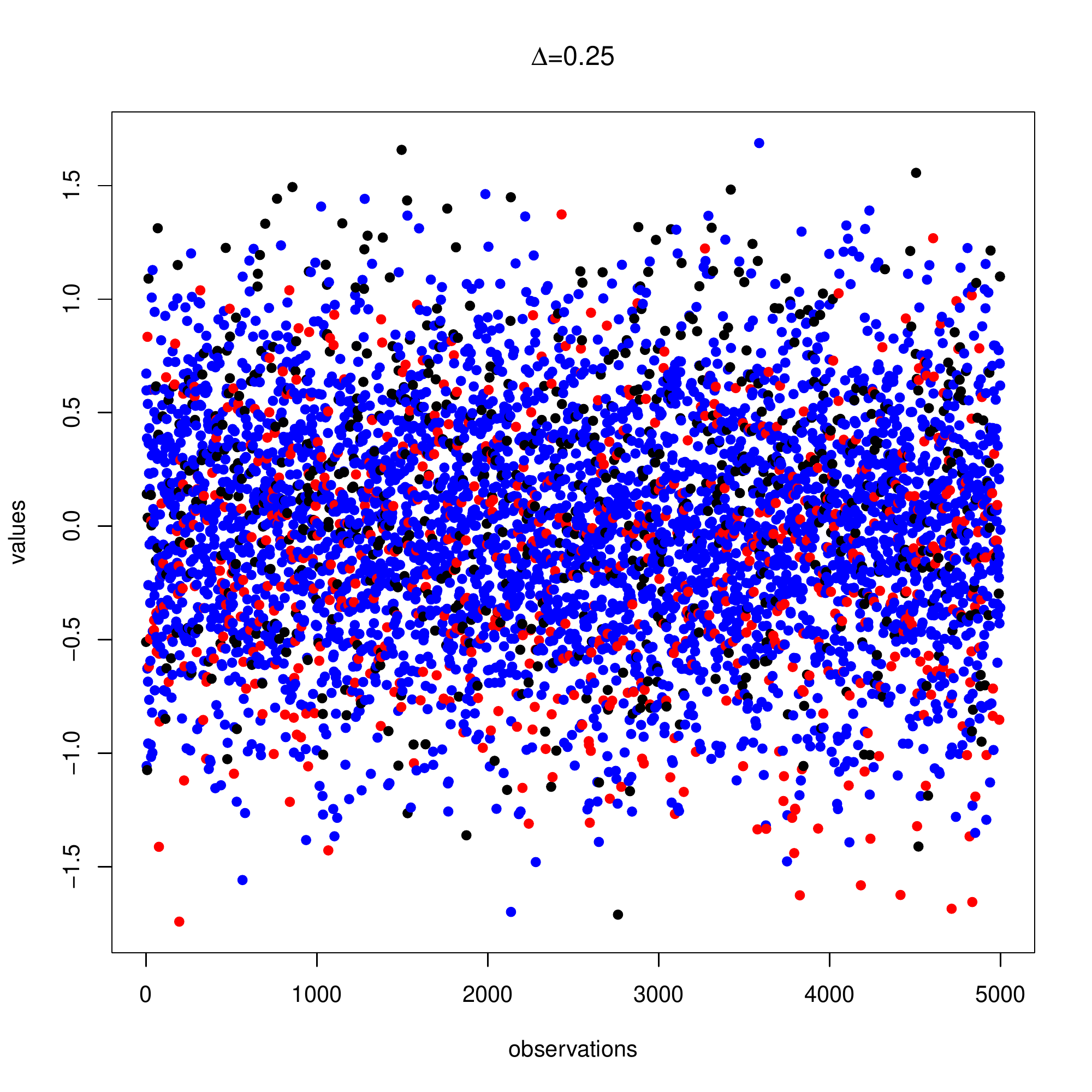}}\hspace{0.5cm}
	\subfigure[]{\includegraphics[width=0.3\linewidth]{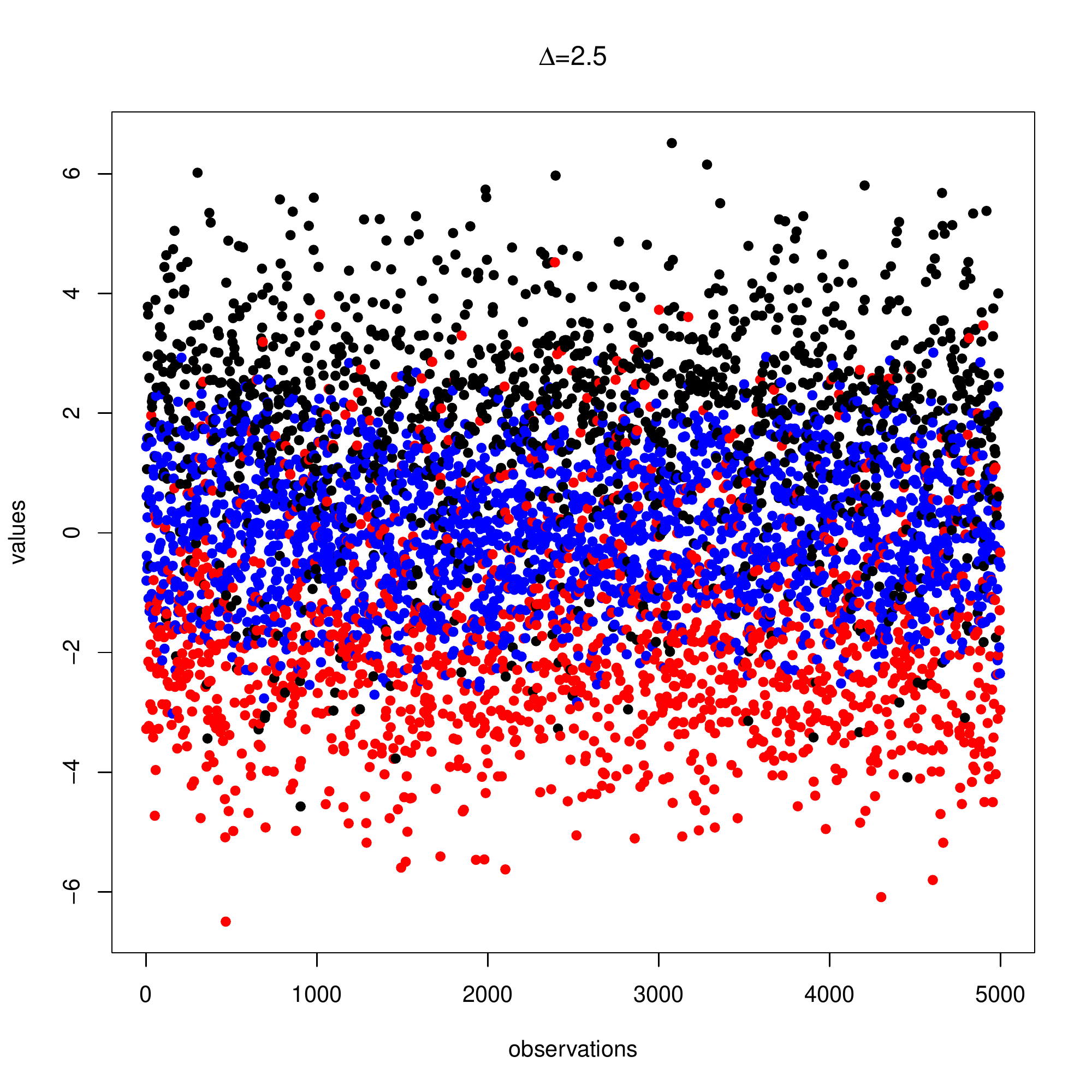}}\hspace{0.5cm}
	\subfigure[]{\includegraphics[width=0.3\linewidth]{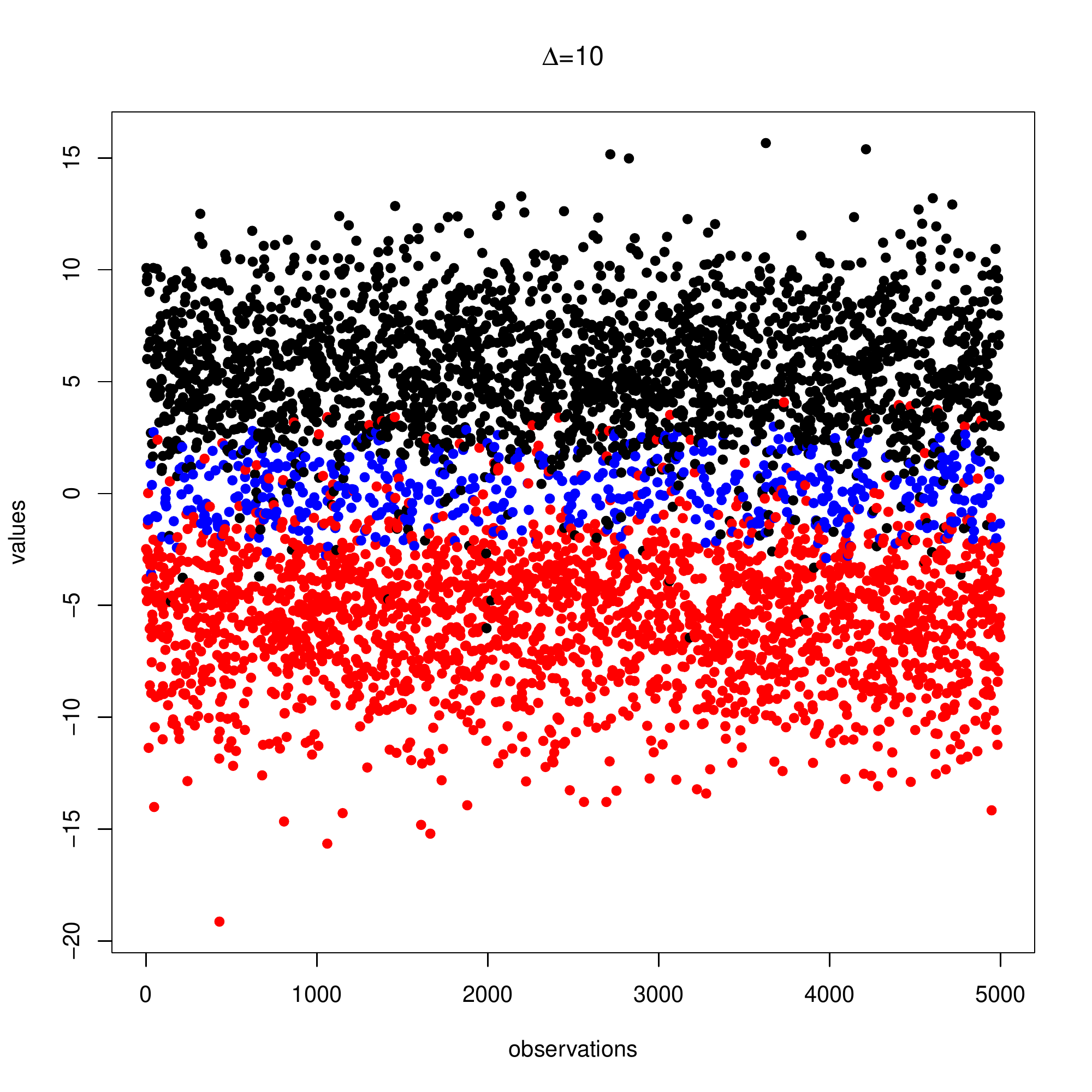}}
	\caption{Simulated scalar observations in the case of $n=5000, \Delta \in$ $\{0.25,2.5,10\}$, and $\pi_1=\pi_2$ (black represents class 1 ;red represents class 2 ; blue represents unclassified observations with $\left.\xi=(1,0.5)^T\right)$. (a) Large overlap region, (b) Moderate overlap region, and (c) Small overlap region represent the simulated scalar observations when $\pi_1=\pi_2$. (d) to (f) show the same overlap regions as (a) to (c) but include unclassified observations (blue) represented by $\boldsymbol{\xi}=(1,0.5)^T$.}
	\label{fig:3}
\end{figure}

When $\boldsymbol{\xi}$ is fixed, equation (\ref{eq:4-3}) is a monotonically decreasing function of $\Delta$, meaning that the proportion of missing class labels in the partially classified sample decreases as $\Delta$ increases with fixed $\boldsymbol{\xi}$. In other words, when $\boldsymbol{\xi}$ is fixed, large/moderate/small overlap leads to a relatively high/median/low proportion of missing class labels under the missing-label mechanism (e.g., Figures~\ref{fig:3}d, \ref{fig:3}e and \ref{fig:3}f). This model is adopted with the understanding that in many datasets, the unclassified observations tend to fall in regions of overlap
between the classes in the feature space.

\noindent\textbf{Remark 1}
	Within the framework of the missing-data mechanism, we prefer to apply the estimated Bayes' rule obtained using the ML estimator $\hat{\boldsymbol{\beta}}_{\mathrm{PC}}^{\text {(full) }}$ instead of $\hat{\boldsymbol{\beta}}_{\mathrm{PC}}^{(\mathrm{ig})}$ for a partially classified sample, regardless of the extent of overlap or the proportion of missing class labels. For a completely classified sample, the estimated Bayes' rule obtained using the ML estimator $\hat{\boldsymbol{\beta}}_{\mathrm{PC}}^{(\mathrm{full})}$ is preferred over that obtained using $\hat{\boldsymbol{\beta}}_{\mathrm{CC}}$ either when both the overlap region and the proportion of missing class labels are moderate or small, or when the overlap region is large but the proportion of missing class labels is relatively small.

\section{Simulation study\label{SEC:5}}
To our knowledge, the ARE analysis is limited to the specific case of a two-class normal homoscedastic model because of the calculation's complexity. 
Exploring more general situations is challenging as it requires theoretical discernment of the conditions where Bayes' rule, estimated using a full likelihood ML estimator, is valid. Therefore, we broaden our investigation through simulation studies to evaluate whether our preliminary conclusions hold in more varied contexts.

\subsection{Two-class normal model with unequal covariance matrices\label{SEC:5-1}}
We continue to study the two-class normal model but extend it to unequal covariance matrices. We consider proportional covariance matrices; i.e., we assume $\boldsymbol{\Sigma}_1=$ $\boldsymbol{\Sigma}$ and $\boldsymbol{\Sigma}_2=\lambda \boldsymbol{\Sigma}$. This model has been studied by \cite{bartlett1963discrimination,gilbert1969effect,han1969distribution,marks1974discriminant,mclachlan1975iterative}. The error rate of the Bayes' rule is evaluated for various combinations of the parameters in canonical form, where $\boldsymbol{\mu}_1=\mathbf{0}, \boldsymbol{\mu}_2=(\Delta, 0, \ldots, 0)^T$, and $\boldsymbol{\Sigma}=\mathbf{I}$. \cite{marks1974discriminant,mclachlan1975iterative} introduced a parameter $D=\Delta /(1+\sqrt{\lambda})$ as a measure of the degree of separation of two populations (when $\lambda=1$, there is a common covariance, $D=\Delta / 2$), which is defined as the Euclidean distance between $\boldsymbol{\mu}_1$ and the best linear discriminant hyperplane that yields equal error rates for $\pi_1$ and $\pi_2$ (after they have been transformed into canonical form).

	Although our model's extension to unequal covariance matrices is modest, deriving the expression for Asymptotic Relative Efficiency (ARE) as conducted in Theorem~\ref{thm1} is too complex. Consequently, we use simulation calculations to investigate the relationship between simulated relative efficiency, the degree of population separation, and the proportion of missing class labels.

	The first step is to examine the correlation between population separation and missing class labels.
	For each simulation combination, we generated $B=1000$ samples (replications) with a sample size of $n=200$ and an equal probability of choosing each population, i.e., $\pi_1=\pi_2$. For each combination of $p\in\{2, 3, 4\}$, $\lambda\in\{2, 3, 4\}$, $\Delta\in\{0.5,1,1.5,2,\ldots,12.5\}$, and $\boldsymbol{\xi}\in \{(3,1)^T, (3,4)^T \}$, the observations and missing label indicators in each sample were generated following the equations (\ref{eq:2-1}) and (\ref{eq:3-4}), respectively. Based on this simulation setup, we observe that the degree of separation, $D=\Delta/(1+\sqrt{\lambda})$, varies roughly from 0.2 to 4. Figure~\ref{fig:4} indicates that for all combinations, the average proportion of missing class labels across 1000 samples diminishes as the degree of separation between the two populations augments from high overlap to negligible overlap.
	
	The second step is to examine how relative efficiency correlates with the proportion of missing class labels, we employed the same simulation setup, but only selected $\Delta\in\{1, 2, 3\}$. In each replication, the estimates 
	$\hat{\boldsymbol{\theta}}_{\mathrm{CC}}$, $\hat{\boldsymbol{\theta}}_{\mathrm{PC}}^{\mathrm{ig})}$ and $\hat{\boldsymbol{\theta}}_{\mathrm{PC}}^{(\mathrm{full})}$ were computed using the general log-likelihood function, (\ref{eq:3-2}), (\ref{eq:3-1}), and (\ref{eq:3-9}), respectively. Since we cannot analytically calculate the conditional error rate (\ref{eq:2-4}), we utilized a holdout sample of size $0.2\times n$ in each simulation trial to assess it.

\begin{figure}[h!]
	\includegraphics[width=\linewidth]{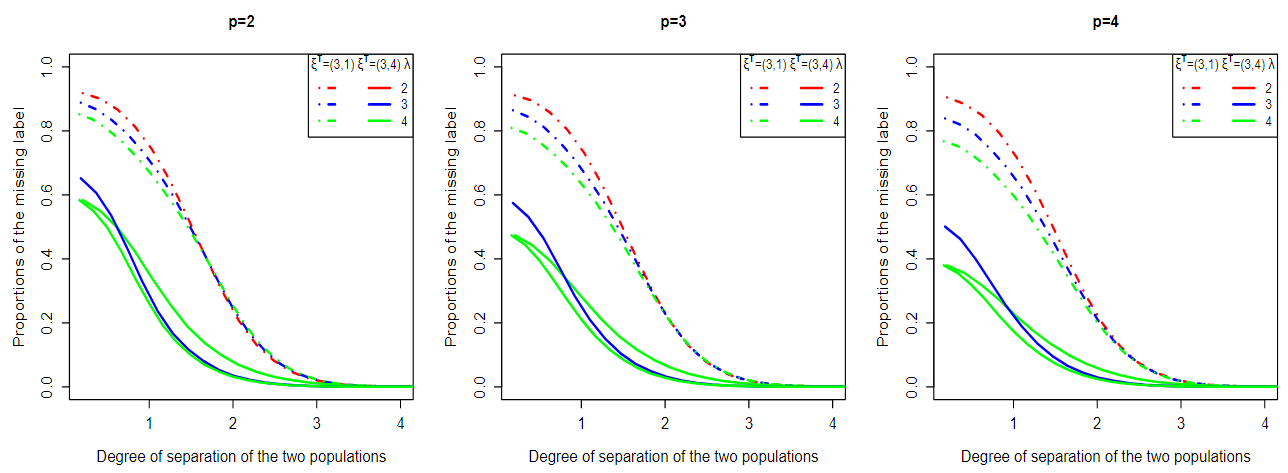}
	\caption{Plots of the degree of separation of two populations versus the average proportion of missing class labels over 1000 samples with sample size $n=200$ and $\pi_1=\pi_2$.}
	\label{fig:4}
\end{figure}

Then, the relative efficiency (RE) of $R(\hat{\boldsymbol{\theta}}_{\mathrm{PC}}^{(\mathrm{full})})$ compared to $R(\hat{\boldsymbol{\theta}}_{\mathrm{CC}})$ was estimated as
\begin{equation}\label{eq:4-5}
	\overline{\operatorname{RE}}\left\{R(\hat{\boldsymbol{\theta}}_{\mathrm{PC}}^{(\mathrm{full})}, \hat{\boldsymbol{\theta}}_{\mathrm{CC}})\right\}=\frac{B^{-1} \sum_{b=1}^B\left\{\operatorname{err}(\hat{\boldsymbol{\theta}}_{\mathrm{CC}}^{(b)})-\operatorname{err}(\boldsymbol{\theta})\right\}}{B^{-1} \sum_{b=1}^B\left\{\operatorname{err}(\hat{\boldsymbol{\theta}}_{\mathrm{PC}}^{(\mathrm{full}, b)})-\operatorname{err}(\boldsymbol{\theta})\right\}},
\end{equation}
and the $\mathrm{RE}$ of $R(\hat{\boldsymbol{\theta}}_{\mathrm{PC}}^{(\mathrm{full})})$ compared to $R(\hat{\boldsymbol{\theta}}_{\mathrm{PC}}^{(\mathrm{ig})})$ was estimated as
\begin{equation}\label{eq:4-6}
	\overline{\mathrm{RE}}\left\{R(\hat{\boldsymbol{\theta}}_{\mathrm{PC}}^{(\mathrm{full})}, \hat{\boldsymbol{\theta}}_{\mathrm{PC}}^{(\mathrm{ig})})\right\}=\frac{B^{-1} \sum_{b=1}^B\left\{\operatorname{err}(\hat{\boldsymbol{\theta}}_{\mathrm{PC}}^{\mathrm{(ig, b)}})-\operatorname{err}(\boldsymbol{\theta})\right\}}{B^{-1} \sum_{b=1}^B\left\{\operatorname{err}(\hat{\boldsymbol{\theta}}_{\mathrm{PC}}^{(\mathrm{full}, b)})-\operatorname{err}(\boldsymbol{\theta})\right\}} .
\end{equation}

\begin{table}[h!]
	\caption{Simulated relative efficiency $\overline{\mathrm{RE}}\left\{R(\hat{\boldsymbol{\theta}}_{\mathrm{PC}}^{\text {(full) }}, \hat{\boldsymbol{\theta}}_{\mathrm{CC}})\right\}$ with $g=2, \pi_1=\pi_2$, $\boldsymbol{\mu}_1=\mathbf{0}, \boldsymbol{\mu}_2=(\Delta, 0, \ldots, 0)^T, \boldsymbol{\Sigma}_1=\mathbf{I}$ and $\boldsymbol{\Sigma}_2=\lambda \mathbf{I}$ for $n=200$ and the missing labels $M_j$ generated with a missing label probability of $q(\mathbf{y}_j^{(b)} ; \boldsymbol{\theta}, \boldsymbol{\xi})$ in the $b$th trial (the proportion of missing class labels in parentheses)}
	\label{tab:1} 
	\begin{tabular}{llllllll}
	\toprule \multirow{2}{*}{$\lambda$} &\multirow{2}{*}{$\Delta$}& \multicolumn{3}{c}{$\boldsymbol{\xi}=(3,1)^T$} & \multicolumn{3}{c}{$\boldsymbol{\xi}=(3,4)^T$} \\
		\cmidrule{ 3 - 8 } 
		&  & $p=2$ & $p=3$ & $p=4$ & $p=2$ & $p=3$ & $p=4$ \\
	\midrule \multirow{3}{*}{2}& 1 & $0.21(0.90)$ & $0.27(0.89)$ & $0.28(0.89)$ & $1.20(0.66)$ & $1.71(0.60)$ & $1.71(0.60)$ \\
		& 2 & $0.26(0.81)$ & $0.70(0.79)$ & $0.50(0.79)$ & $1.93(0.41)$ & $1.95(0.37)$ & $4.00(0.37)$ \\
		& 3 & $1.07(0.65)$ & $1.09(0.62)$ & $1.09(0.62)$ & $1.71(0.20)$ & $1.58(0.18)$ & $1.58(0.18)$ \\
	\midrule & 1 & $0.28(0.87)$ & $0.53(0.84)$ & $0.57(0.82)$ & $1.52(0.61)$ & $1.54(0.46)$ & $1.54(0.46)$ \\
		3 & 2 & $0.34(0.79)$ & $0.73(0.74)$ & $0.87(0.74)$ & $2.93(0.44)$ & $1.71(0.32)$ & $1.76(0.32)$ \\
		& 3 & $0.35(0.67)$ & $1.34(0.62)$ & $1.34(0.62)$ & $3.10(0.24)$ & $1.39(0.19)$ & $1.39(0.19)$ \\
		\midrule \multirow{3}{*}{4} & 1 & $0.37(0.84)$ & $0.61(0.79)$ & $0.75(0.75)$ & $1.25(0.55)$ & $1.82(0.35)$ & $1.79(0.36)$ \\
		& 2 & $0.50(0.77)$ & $0.94(0.69)$ & $1.08(0.69)$ & $1.50(0.42)$ & $2.46(0.27)$ & $1.59(0.27)$ \\
		& 3 & $0.46(0.67)$ & $1.09(0.60)$ & $1.09(0.60)$ & $3.04(0.26)$ & $1.58(0.17)$ & $2.03(0.17)$ \\
	\bottomrule
	\end{tabular}
\end{table}

The nonparametric bootstrap with 1000 re-samples was used to assess the
variability of the estimates. All bootstrap standard errors are mostly small
relative to their estimand.
\begin{table}[h!]
	\caption{Simulated relative efficiency $\overline{\operatorname{RE}}\left\{R(\hat{\boldsymbol{\theta}}_{\mathrm{PC}}^{(\mathrm{full})}, \hat{\boldsymbol{\theta}}_{\mathrm{PC}}^{(\mathrm{ig})})\right\}$ with $g=2, \pi_1=\pi_2$, $\boldsymbol{\mu}_1=\mathbf{0}, \boldsymbol{\mu}_2=(\Delta, 0, \ldots, 0)^T, \boldsymbol{\Sigma}_1=\mathbf{I}$ and $\boldsymbol{\Sigma}_2=\lambda \mathbf{I}$ for $n=200$ and the missing labels $M_j$ generated with a missing label probability of $q(\mathbf{y}_j^{(b)} ; \boldsymbol{\theta}, \boldsymbol{\xi})$ in the $b$th trial (the proportion of missing class labels in parentheses)}\label{tab:2} 
	\begin{tabular}{llllllll}
		\toprule \multirow{2}{*}{$\lambda$} &\multirow{2}{*}{$\Delta$}& \multicolumn{3}{c}{$\boldsymbol{\xi}=(3,1)^T$} & \multicolumn{3}{c}{$\boldsymbol{\xi}=(3,4)^T$} \\
		\cmidrule{ 3 - 8 } 
		&  & $p=2$ & $p=3$ & $p=4$ & $p=2$ & $p=3$ & $p=4$ \\
		\midrule  & 1 & $1.28(0.90)$ & $1.15(0.89)$ & $1.28(0.89)$ & $3.60(0.66)$ & $4.15(0.60)$ & $4.15(0.60)$ \\
			2 & 2 & $1.96(0.81)$ & $2.22(0.79)$ & $2.29(0.79)$ & $4.15(0.41)$ & $3.11(0.37)$ & $6.83(0.37)$ \\
			& 3 & $3.03(0.65)$ & $2.95(0.62)$ & $2.95(0.62)$ & $1.92(0.20)$ & $2.41(0.18)$ & $2.41(0.18)$ \\
				\midrule \multirow{3}{*}{3} & 1 & $1.74(0.87)$ & $1.75(0.84)$ & $1.81(0.82)$ & $3.49(0.61)$ & $2.80(0.46)$ & $2.80(0.46)$ \\
			& 2 & $1.49(0.79)$ & $2.20(0.74)$ & $2.29(0.74)$ & $4.86(0.44)$ & $2.91(0.32)$ & $2.54(0.32)$ \\
			& 3 & $1.96(0.67)$ & $2.93(0.62)$ & $2.93(0.62)$ & $3.76(0.24)$ & $1.88(0.19)$ & $1.88(0.19)$ \\
			\midrule \multirow{3}{*}{4} & 1 & $1.71(0.84)$ & $1.71(0.79)$ & $2.10(0.75)$ & $2.98(0.55)$ & $2.55(0.35)$ & $3.01(0.36)$ \\
			& 2 & $2.01(0.77)$ & $2.46(0.69)$ & $2.62(0.69)$ & $2.80(0.42)$ & $3.87(0.27)$ & $2.52(0.27)$ \\
			& 3 & $2.96(0.67)$ & $2.51(0.60)$ & $2.76(0.60)$ & $1.21(0.26)$ & $1.97(0.17)$ & $1.81(0.17)$ \\
		\bottomrule
		\end{tabular}
\end{table}

In an analysis employing a multi-way ANOVA on Tables~\ref{tab:1} and \ref{tab:2}, we assessed the impact of variables $\lambda$, $\Delta$, $p$, and $\boldsymbol{\xi}$ on the proportion of missing class labels. The results indicate that only the factor $\Delta$ exerted a statistically significant influence on the outcome variable ($p$-value $< 0.001$). The variable $\boldsymbol{\xi}$ approached statistical significance with a p-value of 0.073, corroborating the findings presented in Figure~\ref{fig:4}.

Table~\ref{tab:1} reports the results for the $\mathrm{RE}$ of $R(\hat{\boldsymbol{\theta}}_{\mathrm{PC}}^{(\mathrm{full})})$ compared to $R(\hat{\boldsymbol{\theta}}_{\mathrm{CC}})$. 
For $\boldsymbol{\xi}=(3,1)^T$, we can see that when $\lambda$ is fixed and $\Delta$ increases (or the degree of separation of the two populations increases), the proportion of missing class labels drops (from approximately 0.9 to approximately 0.6), consistent with Figure~\ref{fig:4}, and all simulated $\mathrm{RE}$ values increase accordingly, except the value for $p=2, \lambda=4$ and $\Delta=3$. In other words, when excessive overlap decreases to moderate overlap with a relatively high proportion of missing class labels, the error rate of $R(\hat{\boldsymbol{\theta}}_{\mathrm{PC}}^{\text {(full) }})$ decreases to close to that of $R(\hat{\boldsymbol{\theta}}_{\mathrm{CC}})$, and in some cases, the error rate of $R(\hat{\boldsymbol{\theta}}_{\mathrm{PC}}^{(\mathrm{full})})$ is even smaller than that of $R(\hat{\boldsymbol{\theta}}_{\mathrm{CC}})$. A possible explanation for why the value for $p=2, \lambda=4$ and $\Delta=3$ does not grow is that there is still excessive overlap $(D=1)$ with a relatively high proportion of missing class labels and a dimension equal 2. For $\boldsymbol{\xi}=(3,4)^T$, the proportion of missing class labels decreases (from approximately 0.66 to approximately 0.17) when $\lambda$ is fixed and $\Delta$ increases (or the degree of separation of the two populations increases), which is similarly consistent with Figure~\ref{fig:4}. All simulated RE values are greater than 1, meaning that all classifiers $R(\hat{\boldsymbol{\theta}}_{\mathrm{PC}}^{(\mathrm{full})})$ perform better than $R(\hat{\boldsymbol{\theta}}_{\mathrm{CC}})$ when the size of the overlap region changes from large to moderate with a relatively low proportion of missing class labels. Another finding is that when $\lambda$ is fixed and $\Delta$ increases, some of simulated RE values initially increase and then decrease, which may be interpreted as cases in which, with a relatively small proportion of missing labels, the overlap first reaches the most appropriate overlap and then further decreases. This phenomenon is consistent with the results in Figure~\ref{fig:1}. Therefore, within the framework of the missing-data mechanism, for a two-class normal mixture model with unequal covariance matrices, the classifier $R(\hat{\boldsymbol{\theta}}_{\mathrm{PC}}^{(\mathrm{full})})$ performs better than $R(\hat{\boldsymbol{\theta}}_{\mathrm{CC}})$ when the overlap region ranges from moderate to small with the proportion of missing class labels from relatively high to low, or when the overlap region is large but the proportion of missing class labels is relatively low.

Table~\ref{tab:2} reports the results for the RE of $R(\hat{\boldsymbol{\theta}}_{\mathrm{PC}}^{(\mathrm{full})})$ compared to $R(\hat{\boldsymbol{\theta}}_{\mathrm{PC}}^{(\mathrm{ig})})$. 
We find that all RE values are greater than 1, indicating that the classifier $R(\hat{\boldsymbol{\theta}}_{\mathrm{PC}}^{\text {(full)}})$ performs better than $R(\hat{\boldsymbol{\theta}}_{\mathrm{PC}}^{\text {(ig)}})$. For $\boldsymbol{\xi}=(3,1)^T$ (a relatively high missing label proportion between 0.6 and 0.9 ), when $\lambda$ is fixed, the higher the $D$ value is, the higher the simulated RE value, implying that the classifier $R(\hat{\boldsymbol{\theta}}_{\mathrm{PC}}^{(\text {full})})$ performs much better than $R(\hat{\boldsymbol{\theta}}_{\mathrm{PC}}^{(\mathrm{ig})})$ as the overlap region changes from large to moderate. For $\boldsymbol{\xi}=(3,4)^T$ (a relatively low missing label proportion between 0.17 and 0.66), when $\lambda$ is fixed and $\Delta$ increases, some simulated RE values increase then decrease while others only decrease, which may be interpreted to indicate that under a relatively low proportion of missing class labels, the overlap must first reach the most appropriate value in some cases before decreasing further, while in other cases, the overlap starts at the most appropriate value and then decreases. This phenomenon is consistent with the results in Figure~\ref{fig:2}. Therefore, for a two-class normal mixture model with unequal covariance matrices, the classifier $R(\hat{\boldsymbol{\theta}}_{\mathrm{PC}}^{\text {(full)}})$ performs better than $R(\hat{\boldsymbol{\theta}}_{\mathrm{PC}}^{\text {(ig)}})$ regardless of the extent of overlap, and the performance gap between $R(\hat{\boldsymbol{\theta}}_{\mathrm{PC}}^{\text {(full)}})$ and $R(\hat{\boldsymbol{\theta}}_{\mathrm{PC}}^{\text {(ig) }})$ reaches its maximum when the partially classified sample has a moderate overlap region.

\subsection{Three-class normal model with unequal covariance matrices}
In this simulation, we consider a three-class normal model with unequal covariance matrices, for which is not straightforward to obtain the Euclidean
distances between the means or the best linear discriminant hyperplanes (the effort to do so is outside the scope of this study). Thus, we simply explore
the relationship between the simulated efficiency value and the proportion of missing class labels. We constructed $\boldsymbol{\pi}=(\pi_1, \pi_2, \pi_3)^T$, $\boldsymbol{\mu}_1=(-1,0, \ldots, 0)^T$, $\boldsymbol{\mu}_2=\boldsymbol{\mu}_3=(1,0, \ldots, 0)^T$, and $\boldsymbol{\Sigma}_i=\lambda_i \mathbf{I}(i=1,2,3)$. For each combination of $\boldsymbol{\pi}$ and $\lambda_i $ $(\boldsymbol{\pi}\in \{(0.5,0.35,0.15)^T,(1 / 3,1 / 3,1 / 3)^T\}$; $\lambda_1\in\{0.5,1.5,2.5\}$; $\lambda_2\in \{1,2,3\}$; $\lambda_3\in\{2,4,6\})$, we generated $B=1000$ samples with sample size $n=200$ and $p=2$. Note that if $\lambda_1=\lambda_2=2$, observations would be actually generated from a two-component normal model. Therefore, we do not consider this simulation setting. For the partially classified sample $\mathbf{x}_{\text{PC}}$, the missing label indicators were generated using the missing label probability in (\ref{eq:3-4}), where we set $\boldsymbol{\xi}=(3,7)^T$. In each replication, the estimates $\hat{\boldsymbol{\theta}}_{\mathrm{CC}}, \hat{\boldsymbol{\theta}}_{\mathrm{PC}}^{\text {(ig) }}$ and $\hat{\boldsymbol{\theta}}_{\mathrm{PC}}^{\text {(full)}}$ were computed using the general log-likelihood function, (\ref{eq:3-2}), (\ref{eq:3-1}) and (\ref{eq:3-9}), respectively, and we generated a holdout sample of size $0.2 \times n$, to assess the error rate via (\ref{eq:2-4}).

\begin{table}[h!]
	\caption{Simulated relative efficiency $\overline{\operatorname{RE}}\{R(\hat{\boldsymbol{\theta}}_{\mathrm{PC}}^{(\text {full)}})\}$ with $g=3, \boldsymbol{\mu}_1=$ $(-1,0, \ldots, 0)^T, \boldsymbol{\mu}_2=\boldsymbol{\mu}_3=(1,0, \ldots, 0)^T$, and $\boldsymbol{\Sigma}_i=\lambda_i \mathbf{I}(i=1,2,3)$ for $n=200$, and $p=2$ and the missing labels $M_j$ generated with a missing label probability of $q(\mathbf{y}_j^{(b)} ; \boldsymbol{\theta}, \boldsymbol{\xi})$ in the $b$-th trial with $\boldsymbol{\xi}=(3,7)^T$ the proportion of missing class labels in parentheses)}\label{tab:3} 
	\begin{tabular}{llllllll}
		\toprule \multirow{2}{*}{$\lambda_2$} & \multirow{2}{*}{$\lambda_3$} & \multicolumn{3}{c}{$\pi=(0.5,0.35,0.15)^T$} & \multicolumn{3}{c}{$\pi_1=\pi_2=\pi_3$} \\
		\cmidrule { 3 - 8 } & & $\lambda_1=0.5$ & $\lambda_1=1.5$ & $\lambda_1=2.5$ & $\lambda_1=0.5$ & $\lambda_1=1.5$ & $\lambda_1=2.5$ \\
		\midrule \multirow{3}{*}{1} & 2 & $1.70(0.38)$ & $0.49(0.61)$ & $0.34(0.66)$ & $1.05(0.58)$ & $0.51(0.75)$ & $0.32(0.79)$ \\
		& 4 & $2.40(0.30)$ & $1.83(0.54)$ & $0.90(0.61)$ & $2.54(0.47)$ & $0.75(0.69)$ & $1.07(0.74)$ \\
		& 6 & $2.93(0.25)$ & $2.27(0.48)$ & $1.41(0.56)$ & $2.30(0.39)$ & $1.81(0.63)$ & $1.13(0.70)$ \\
		\midrule  \multirow{3}{*}{2} & 2 & $0.58(0.43)$ & $0.27(0.66)$ & $0.20(0.73)$ & $1.22(0.67)$ & $0.39(0.83)$ & $0.21(0.85)$ \\
		& 4 & $1.81(0.38)$ & $0.91(0.65)$ & $0.31(0.73)$ & $1.05(0.60)$ & $0.44(0.81)$ & $0.31(0.85)$ \\
		& 6 & $1.76(0.32)$ & $1.30(0.60)$ & $0.70(0.70)$ & $2.15(0.53)$ & $0.61(0.77)$ & $0.48(0.82)$ \\
			\midrule  \multirow{3}{*}{3} & 2 & $0.67(0.40)$ & $0.34(0.35)$ & $0.22(0.72)$ & $0.82(0.65)$ & $0.53(0.83)$ & $0.37(0.86)$ \\
		& 4 & $1.05(0.40)$ & $0.55(0.67)$ & $0.28(0.77)$ & $0.81(0.64)$ & $0.50(0.86)$ & $0.28(0.90)$ \\
		& 6 & $1.22(0.36)$ & $1.07(0.65)$ & $0.51(0.76)$ & $1.12(0.58)$ & $0.55(0.81)$ & $0.31(0.87)$ \\
	\bottomrule
	\end{tabular}
\end{table}

The results for the $\mathrm{RE}$ of $R(\hat{\boldsymbol{\theta}}_{\mathrm{PC}}^{(\mathrm{full})})$ compared to $R(\hat{\boldsymbol{\theta}}_{\mathrm{CC}})$ and $R(\hat{\boldsymbol{\theta}}_{\mathrm{PC}}^{(\mathrm{ig})})$ are summarized in Table~\ref{tab:3} and Table~\ref{tab:4}, respectively.

Table~\ref{tab:3} reports that when the proportion of missing class labels is relatively high (over 0.65 in the case of $\boldsymbol{\pi}=(0.5,0.35,0.15)^T$ and over 0.74 in the case of $\pi_1=\pi_2=\pi_3)$, the values for the RE of $R(\hat{\boldsymbol{\theta}}_{\mathrm{PC}}^{(\mathrm{full})})$ compared to $R(\hat{\boldsymbol{\theta}}_{\mathrm{CC}})$ are less than 1, namely, the classifier $R(\hat{\boldsymbol{\theta}}_{\mathrm{CC}})$ performs better than $R(\hat{\boldsymbol{\theta}}_{\mathrm{PC}}^{(\mathrm{full})})$; when the proportion of missing class labels is relatively low (below 0.6 in the cases of both $\boldsymbol{\pi}=(0.5,0.35,0.15)^T$ and $\pi_1=\pi_2=\pi_3)$, most values for the $\mathrm{RE}$ of $R(\hat{\boldsymbol{\theta}}_{\mathrm{PC}}^{(\mathrm{full})})$ compared to $R(\hat{\boldsymbol{\theta}}_{\mathrm{CC}})$ are greater than 1, indicating that the classifier $R(\hat{\boldsymbol{\theta}}_{\mathrm{PC}}^{(\mathrm{full})})$ performs better than $R(\hat{\boldsymbol{\theta}}_{\mathrm{CC}})$.

Table~\ref{tab:4} reports the results for the RE of $R(\hat{\boldsymbol{\theta}}_{\mathrm{PC}}^{(\mathrm{full})})$ compared to $R(\hat{\boldsymbol{\theta}}_{\mathrm{PC}}^{(\mathrm{ig})})$ . We find that the proportions of the missing label are between 0.25 and 0.9, and all values of the RE of $R(\hat{\boldsymbol{\theta}}_{\mathrm{PC}}^{(\mathrm{full})})$ compared to $R(\hat{\boldsymbol{\theta}}_{\mathrm{PC}}^{\mathrm{ig})})$ are smaller than 1, which implies that the classifier $R(\hat{\boldsymbol{\theta}}_{\mathrm{PC}}^{(\mathrm{full})})$ outperforms $R(\hat{\boldsymbol{\theta}}_{\mathrm{PC}}^{(\mathrm{ig})})$ on a partially classified sample.

Although we have simply explored the relationship between the simulated efficiency value and the proportion of missing class labels in a three-class normal model with unequal covariance matrices, the results are consistent with the conclusions from Figure~\ref{fig:1} and \ref{fig:2}, and Section~\ref{SEC:5-1}.
\begin{table}[!h]
	\caption{Simulated relative efficiency $\overline{\operatorname{RE}}\{R(\hat{\boldsymbol{\theta}}_{\mathrm{PC}}^{(\mathrm{ig})})\}$ with $g=3, \boldsymbol{\mu}_1=$ $(-1,0, \ldots, 0)^T, \boldsymbol{\mu}_2=\boldsymbol{\mu}_3=(1,0, \ldots, 0)^T$, and $\boldsymbol{\Sigma}_i=\lambda_i \mathbf{I}(i=1,2,3)$. for $n=200$, and $p=2$ and the missing label $M_j$ generated with a missing label probability of $q(\mathbf{y}_j^{(b)} ; \boldsymbol{\theta}, \boldsymbol{\xi})$ in the $b$-th trial with $\boldsymbol{\xi}=(3,7)^T$ (the proportion of missing class labels in parentheses)}\label{tab:4} 
	\begin{tabular}{llllllll}
		\toprule \multirow{2}{*}{$\lambda_2$} & \multirow{2}{*}{$\lambda_3$} & \multicolumn{3}{c}{$\pi=(0.5,0.35,0.15)^T$} & \multicolumn{3}{c}{$\pi_1=\pi_2=\pi_3$} \\
		\cmidrule { 3 - 8 } & & $\lambda_1=0.5$ & $\lambda_1=1.5$ & $\lambda_1=2.5$ & $\lambda_1=0.5$ & $\lambda_1=1.5$ & $\lambda_1=2.5$ \\
\midrule \multirow{3}{*}{1} & 2 & $3.75(0.38)$ & $2.64(0.61)$ & $2.14(0.66)$ & $2.36(0.58)$ & $1.91(0.75)$ & $1.35(0.79)$ \\
	& 4 & $4.39(0.30)$ & $5.99(0.54)$ & $3.07(0.61)$ & $6.71(0.47)$ & $3.06(0.69)$ & $4.00(0.74)$ \\
	& 6 & $4.17(0.25)$ & $5.54(0.48)$ & $4.08(0.56)$ & $4.36(0.39)$ & $4.89(0.63)$ & $3.79(0.70)$ \\
	\midrule \multirow{3}{*}{2} & 2 & $1.49(0.43)$ & $1.64(0.66)$ & $1.34(0.73)$ & $3.04(0.67)$ & $1.97(0.83)$ & $1.57(0.85)$ \\
	& 4 & $3.52(0.38)$ & $3.22(0.65)$ & $1.82(0.73)$ & $2.42(0.6)$ & $1.96(0.81)$ & $1.59(0.85)$ \\
	& 6 & $2.84(0.32)$ & $4.11(0.60)$ & $2.50(0.70)$ & $4.68(0.53)$ & $2.80(0.77)$ & $2.19(0.82)$ \\
	\midrule \multirow{3}{*}{3} & 2 & $1.47(0.40)$ & $1.88(0.35)$ & $1.28(0.72)$ & $1.39(0.65)$ & $1.49(0.83)$ & $1.24(0.86)$ \\
	& 4 & $2.64(0.40)$ & $2.28(0.67)$ & $1.61(0.77)$ & $1.19(0.64)$ & $1.71(0.86)$ & $1.25(0.90)$ \\
	& 6 & $2.31(0.36)$ & $3.36(0.65)$ & $2.38(0.76)$ & $2.14(0.58)$ & $2.38(0.81)$ & $1.50(0.87)$ \\
	\bottomrule
\end{tabular}
\end{table}

\section{Application\label{SEC:6}}

A multitude of practical applications, ranging from medical diagnostics to neuroscience investigations, primarily rely on human expertise for assigning class labels. This methodology of manual dataset annotation gives rise to a systematic mechanism for missingness. Intriguingly, when such missingness is accurately deciphered and leveraged, it can yield significant insights. Our research provides compelling evidence based on two authentic datasets: specifically, the Interneuron dataset and the Skin Lesion dataset. Recent literature, including works by \cite{Koosowska2023microRNAdependentRO,DiVolo2021,Lucius2020,2023IJMS}, highlights the growing interest in these datasets due to their potential to advance understanding in neuroscience and dermatology, respectively. Our investigations reveal a significant correlation between the mechanism for missingness, responsible for the absence of some labels, and the concept of Shannon entropy. This novel finding has multidisciplinary implications across numerous fields, including but not limited to statistics, computer science, and healthcare. Furthermore, we demonstrated that a classifier accounting for this missingness mechanism exhibits superior performance compared to a classifier that disregards it. Intriguingly, it even surpasses the performance of a classifier built on a completely classified sample. These findings have considerable potential to benefit researchers and practitioners across diverse fields by improving classification accuracy and enhancing data utilization. By effectively addressing the issue of missingness in datasets, we catalyze the exploration of new horizons for data-driven insights and research potential. An example with simulated data, intended to demonstrate the application of our methodology, is available in the supplementary material.

\subsection{Interneuron dataset}
There is currently no unique catalog of cortical GABAergic interneuron types.
\cite{mihaljevic2019classification} asked 48 leading neuroscientists to classify 320 interneurons by inspecting images of their morphology. Each neuroscientist used
a web application to classify the interneurons, and each image was resized to
$32 \times 32$ pixels. A total of 48 neuroscientists participated in the study. Here,
we consider only the 42 neuroscientists who fully classified all 320 neurons.
We consider 2 classes in this dataset: transcolumnar and intracolumnar. We
discard those observations for which more than or equal to half of the 42 neuroscientists viewed an entry as not applicable to a given interneuron . Thus,
we consider a total of $n = 304$ observations  {with a dimension of 1024}. The ground truth for each image
was generated by agreement among at least half of the 42 neuroscientists.

We select five neuroscientists who reviewed the images and 3D visualizations to determine whether an interneuron was transcolumnar or intracolumnar. We form a consensus labeling using the individual neuroscientists’ labels.
Observations for which all five neuroscientists agreed are treated as labeled
data, corresponding to $n_l = 173$. Observations for which fewer than five neuroscientists agreed are treated as unlabeled data, corresponding to $n_u = 131$.
The dimensionality of the dataset is reduced to 3 features via principal component analysis (PCA) prior to classification.  The decision to select $p=3$ principal components was guided by the analysis of a scree plot, a graphical representation of the eigenvalues derived from the correlation or covariance matrix. The 'elbow' in the plot, where the subsequent eigenvalues diminish and become approximately equal, was identified at $p=3$, hence substantiating our choice.

We fit a skew t-mixture model \cite{lee2014finite} to estimate the
classification entropy of each observation. Figure~\ref{fig:5} (a) compares the empirical
cumulative distribution functions of the estimated entropy distributions in the
labeled and unlabeled groups. Figure~\ref{fig:5} (b) presents a Nadaraya–Watson kernel
estimate of the probability of missing class labels. From Figure~\ref{fig:5} (a), we find
that the unlabeled observations typically have higher entropy than the labeled
observations. Figure~\ref{fig:5} (b) shows that the estimated probability of missing class
labels decreases as the negative log entropy increases. This is in accordance
with the relation of equation (\ref{eq:3-4}). The higher the entropy is, the higher the
probability of missing class labels. Therefore, we fit a two-class normal model with unequal covariance matrices to compare the three classifiers $R(\hat{\boldsymbol{\theta}}_{\mathrm{CC}})$, $R(\hat{\boldsymbol{\theta}}_{\mathrm{PC}}^{\mathrm{ig})})$ and $R(\hat{\boldsymbol{\theta}}_{\mathrm{PC}}^{(\mathrm{full})})$.

We apply fivefold cross-validation (CV) to test the error rates of the three
classifiers, splitting the interneuron dataset into a test set and a training set.
\begin{figure}[h!]
	\includegraphics[width=\linewidth]{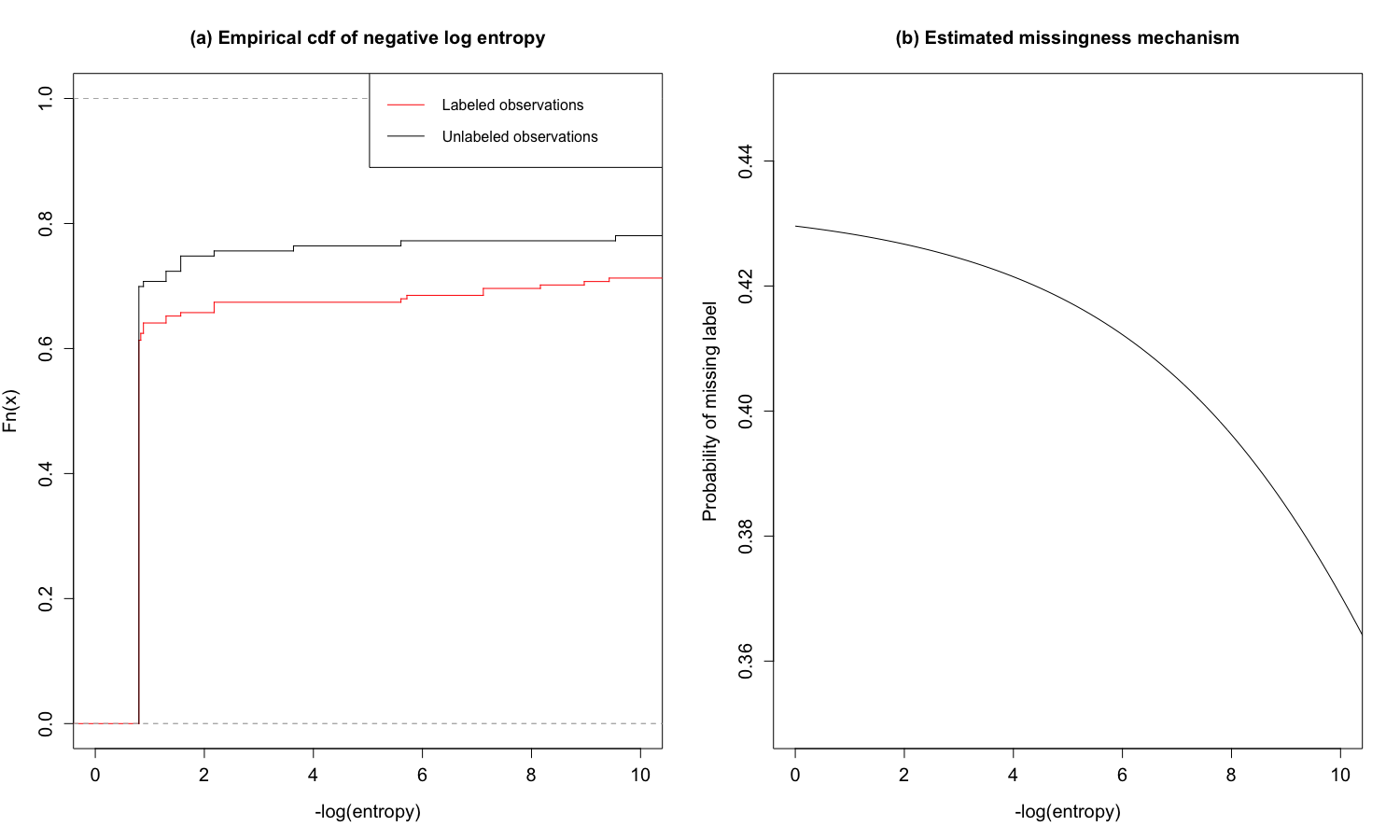}
	\caption{Analysis of the interneuron dataset with regard to the relationship
		between the entropy and the labeled and unlabeled observations.}
	\label{fig:5}
\end{figure}
\begin{table}[h!]
	\caption{Summary statistics for the interneuron dataset. There are $n=304$  observations with dimensions of $p=3$. There are $g=2$ classes (transcolumnar and intracolumnar).}\label{tab:5} 
	\begin{tabular}{ccccccc}
		\toprule
		 Classifier & $n$ & $p$ & $g$ & $n_c$ (classified) & $n_u$ (unclassified) & 5-fold CV error rate \\
			\midrule$R(\hat{\boldsymbol{\theta}}_{\mathrm{CC}})$ & & & & 304 & 0 & 0.2466 \\
			$R(\hat{\boldsymbol{\theta}}_{\mathrm{PC}}^{\text {(ig) }})$ & 304 &3& 2 & 173 & 131 & 0.3406 \\
			$R(\hat{\boldsymbol{\theta}}_{\mathrm{PC}}^{(\text {full })})$ & & & & 173 & 131 & 0.2629 \\
		\bottomrule
	\end{tabular}
\end{table}

The dataset is randomly split into five subsets of equal sizes. Of the five subsets, a single subset is retained as the validation data for testing the model,
and the remaining four subsets are used as the training data. This CV process
is repeated five times, with each of the five subsets used exactly once as the
validation data. The five results are then averaged to produce a single estimate.
The error rate summary is given in Table~\ref{tab:5}, and we find that the fivefold CV
error rate from the partially classified sample based on the missingness mechanism is smaller than that obtained when ignoring the missingness mechanism
and close to the fivefold CV error rate from the completely classified sample.

\subsection{Skin lesion dataset}
Dermatoscopic images can be accessed and downloaded from Harvard Dataverse provided by \cite{DVN/DBW86T_2018}. We select 3 classes e.g., benign keratosis (BKL), dermatofibroma (DF), and vascular skin lesions (VASC). The
dataset includes a set of ground–truth labels of two types. Some labels have
been confirmed by histopathology (performed by specialized dermatopathologists), while the rest have been confirmed either by follow-up, expert consensus, or in-vivo confocal microscopy. We treat the observations diagnosed by
dermatopathologists, as labeled data and the rest of the cases as data with
missing labels. Thus, the subset we consider contains $n = 1356$ observations,
with $n_l = 888$ labeled observations and $n_u = 468$ unlabeled observations.
To reduce the dimensionality of the dataset, we employed sparse linear discriminant analysis (LDA) \cite{clemmensen2011sparse} to select a subset of four features for model-based clustering.
Employing LDA for dimensionality reduction in a skin lesion dataset was motivated by the availability of ground-truth labels. These labels served as response variables in LDA, facilitating the selection of four discriminative features, thereby reducing feature redundancy and enhancing subset quality.

Similarly, we fit a t-mixture model \cite{lee2014finite} to estimate
the classification entropy of each observation. Figure~\ref{fig:6} (a) compares box-plots
of the estimated entropies in the labeled and unlabeled groups. Figure~\ref{fig:6} (b)
presents a Nadaraya–Watson kernel estimate of the probability of missing class
labels. From Figure~\ref{fig:6} (a), we find that the unlabeled observations typically
have higher entropy than the labeled observations. Figure~\ref{fig:6} (b) shows the estimated probability of missing class labels decreases as the negative log entropy
increases. This is in accordance with the relation of equation (\ref{eq:3-4}). Therefore,
we fit a three-class normal model with unequal covariance matrices to compare
the three classifiers $R(\hat{\boldsymbol{\theta}}_{\mathrm{CC}}), R(\hat{\boldsymbol{\theta}}_{\mathrm{PC}}^{(\mathrm{ig})})$ and $R(\hat{\boldsymbol{\theta}}_{\mathrm{PC}}^{(\mathrm{full})})$.
\begin{table}[h!]
	\caption{Summary statistics for the skin lesion dataset. There are $n = 1356$
		 observations with $p = 4$ dimensions. There are $g = 3$ classes (BKL,
		DF, and VASC).}\label{tab:6} 
	\begin{tabular}{ccccccc}
		\toprule
		Classifier & $n$ & $p$ & $g$ & $n_c$ (classified) & $n_u$ (unclassified) & 5-fold CV error rate \\
		\midrule$R(\hat{\boldsymbol{\theta}}_{\mathrm{CC}})$ & & & & 1356 &0 &0.1998 \\
		$R(\hat{\boldsymbol{\theta}}_{\mathrm{PC}}^{\text {(ig) }})$ & 1356 & 4 & 3 & 888& 468& 0.2005 \\
		$R(\hat{\boldsymbol{\theta}}_{\mathrm{PC}}^{(\text {full })})$ & & & & 888 &468 &0.1969 \\
		\bottomrule
	\end{tabular}
\end{table}

We apply fivefold CV to test the error rates of three classifiers, splitting the
skin lesion dataset into a test set and a training set. The error rate summary
is given in Table~\ref{tab:6}. We find that the error rate from the partially classified
sample based on the missingness mechanism is smaller than that obtained
when the missingness mechanism is ignored and even better than that from
the completely classified sample. But the difference between the error rates
are very small, suggesting that the unlabeled observations must be easy to
classify in the iterative fitting of  the EM algorithm and entropies in Figure~\ref{fig:6}(a) are relatively low which can explain that $R(\hat{\boldsymbol{\theta}}_{\mathrm{PC}}^{(\mathrm{full})})$ does not do much better than $R(\hat{\boldsymbol{\theta}}_{\mathrm{PC}}^{(\mathrm{ig})})$ and is close to $R(\hat{\boldsymbol{\theta}}_{\mathrm{CC}})$.

Note that in using cross-validation to estimate the error rate of the various rules, the rule $R(\hat{\boldsymbol{\theta}}_{\mathrm{CC}})$ has an advantage over $R(\hat{\boldsymbol{\theta}}_{\mathrm{PC}})$ in their application to the unclassified feature vectors since $R(\hat{\boldsymbol{\theta}}_{\mathrm{CC}})$ is formed using the true values of the labels for the unclassified features. Bearing this in mind, the full rule based on the partially classified sample does well to be fairly close to that of the rule based on the completely classified sample in the former Interneuron dataset and even better in this Skin Lesion dataset.
\begin{figure}[h!]
	\includegraphics[width=\linewidth]{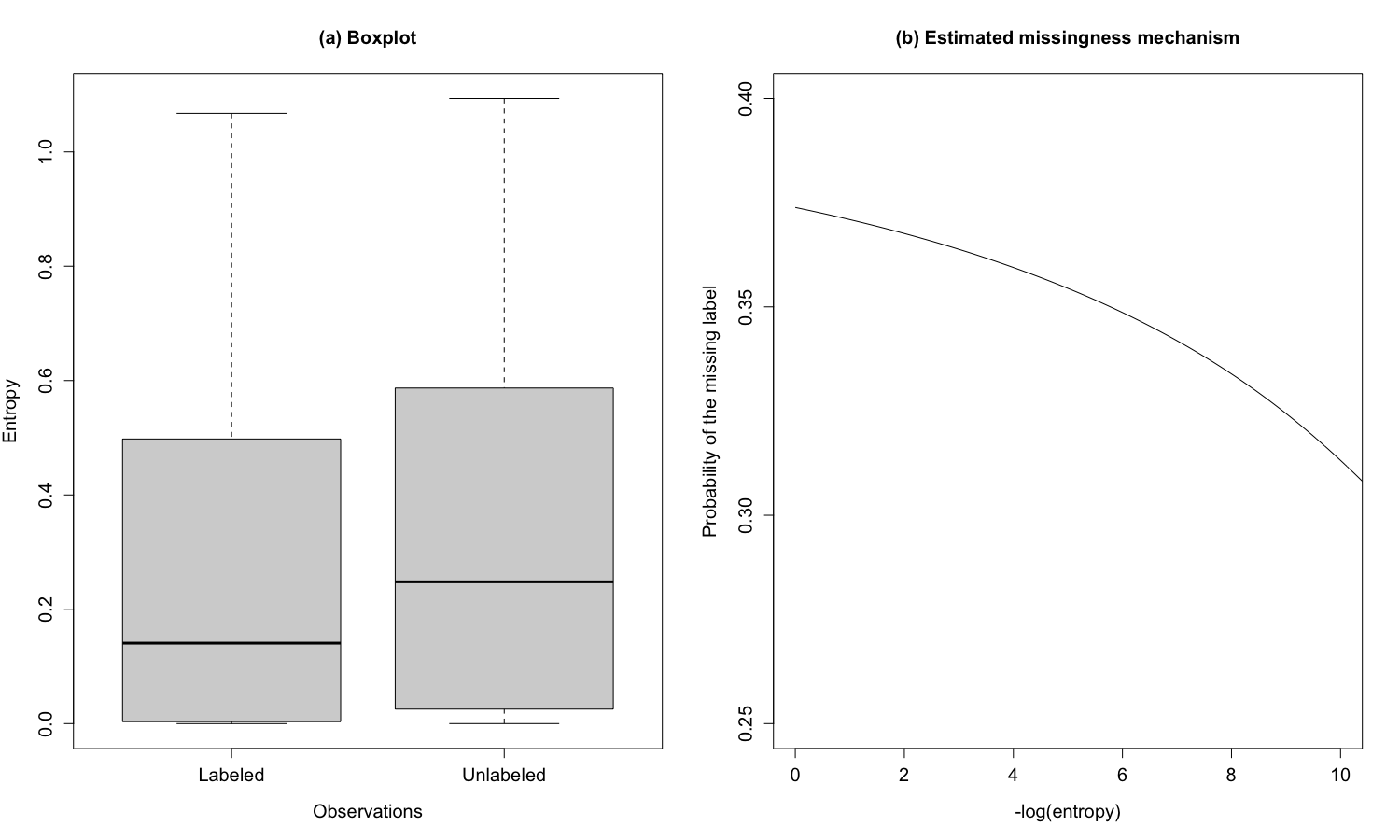}
	\caption{Analysis of the skin lesion dataset with regard to the relationship
		between the entropy and the labeled and unlabeled observations.}
	\label{fig:6}
\end{figure}

 \section{Discussion\label{SEC:7}}
In the present study, we have delved into a statistical semi-supervised learning
(SSL) methodology predicated on a finite normal mixture model, taking into
account the MAR concept of missing data. The missingness mechanism is articulated as a logistic function to model the conditional
probability of a label being missing, contingent on the log entropy of the observation. We have scrutinized the conditions under which the classifier $R_{\mathrm{PC}}^{(\text {full})}$ is more advantageous to use than $R_{\mathrm{CC}}$ for a two-class normal homoscedastic model. Our findings suggest that the classifier $R_{\mathrm{PC}}^{(\mathrm{full})}$ on a partially classified sample can surpass the completely supervised learning classifier $R_{\mathrm{CC}}$ when the size of the overlap region ranges from moderate to small and the proportion of missing class labels ranges from relatively high to low. Alternatively, this can also occur when the overlap region is large but the proportion of missing labels is relatively low. Furthermore, the classifier $R_{\mathrm{PC}}^{(\mathrm{full})}$ consistently outperforms $R_{\mathrm{PC}}^{(\mathrm{ig})}$ irrespective of the extent of overlap or the proportion of missing class labels. We have conducted simulation experiments to extend the analysis to two-class and three-class normal mixture models with unequal covariance matrices, and the results further corroborate the above analysis. We have also compared the three classifiers on two real datasets, and the results align with our expectations and the theoretical results.

While our study provides preliminary evidence in support of the efficacy of the proposed semi-supervised learning approach, it is important to acknowledge the limitation of the size of the real-world datasets utilized. The datasets were manually labeled by domain experts, a process that is both labor-intensive and costly, thus limiting the scale of data available for our study. Moreover, we designed our analyses to focus on a particular type of missingness mechanism.
To mitigate this limitation, we included a simulated example based on synthetic datasets generated to mimic the specific missingness mechanism in the supplementary.  The results from these additional experiments are consistent with those derived from the real-world datasets, thereby enhancing the robustness and credibility of our findings.
Future research would benefit from the inclusion of larger and more diverse datasets to validate the generalizability and robustness of the proposed semi-supervised learning method.

In light of the current research landscape, there has been a recent resurgence of interest in SSL in the machine learning community due to significant empirical progress on benchmark image and text classification datasets.
However, from a statistical perspective, the theoretical understanding of these
modern SSL algorithms is still limited \cite{ahfock2023semi}. One of
the most intuitive approaches to SSL is self-training, an iterative method for
learning with alternating steps between generating pseudo-labels for the unlabeled observations and then training a classifier using both the labeled and
pseudo-labeled data. Theoretical analysis of SSL has been limited, but recent
work by \cite{ahfock2020apparent} provided an asymptotic basis on how
to increase the accuracy of the commonly used linear discriminant function
formed from a partially classified sample in SSL. This increase in accuracy
can be of sufficient magnitude for this SSL-based classifier to have a smaller
error rate than that if it were formed from a completely classified sample.
Given the scarcity of theoretical analysis in this area, future work will encompass mathematical and empirical studies of more complex models. This will
include exploring the potential of the missingness mechanism to provide additional information and improve classification accuracy, as well as investigating
the conditions under which these gains can be achieved.


\bibliographystyle{spmpsci}      
\bibliography{mythesisbib}   

\begin{thebibliography}{10}
\providecommand{\url}[1]{{#1}}
\providecommand{\urlprefix}{URL }
\expandafter\ifx\csname urlstyle\endcsname\relax
  \providecommand{\doi}[1]{DOI~\discretionary{}{}{}#1}\else
  \providecommand{\doi}{DOI~\discretionary{}{}{}\begingroup
  \urlstyle{rm}\Url}\fi

\bibitem{ahfock2020apparent}
Ahfock, D., McLachlan, G.J.: An apparent paradox: a classifier based on a
  partially classified sample may have smaller expected error rate than that if
  the sample were completely classified.
\newblock Statistics and Computing pp. 1--12 (2020)

\bibitem{ahfock2023semi}
Ahfock, D., McLachlan, G.J.: Semi-supervised learning of classifiers from a
  statistical perspective: A brief review.
\newblock Econometrics and Statistics \textbf{26}, 124--138 (2023)

\bibitem{bartlett1963discrimination}
Bartlett, M., Please, N.: Discrimination in the case of zero mean differences.
\newblock Biometrika \textbf{50}(1/2), 17--21 (1963)

\bibitem{blum1998combining}
Blum, A., Mitchell, T.: Combining labeled and unlabeled data with co-training.
\newblock In: Proceedings of the eleventh annual conference on Computational
  learning theory, pp. 92--100 (1998)

\bibitem{chapelle2006semi}
Chapelle, O., Sch{\"o}lkopf, B., Zien, A.: Semi-supervised learning. adaptive
  computation and machine learning.
\newblock MIT Press, Cambridge, MA, USA. Cited in page (s) \textbf{21}(1), 2
  (2010)

\bibitem{chawla2005learning}
Chawla, N.V., Karakoulas, G.: Learning from labeled and unlabeled data: An
  empirical study across techniques and domains.
\newblock Journal of Artificial Intelligence Research \textbf{23}, 331--366
  (2005)

\bibitem{clemmensen2011sparse}
Clemmensen, L., Hastie, T., Witten, D., Ersb{\o}ll, B.: Sparse discriminant
  analysis.
\newblock Technometrics \textbf{53}(4), 406--413 (2011)

\bibitem{come2009learning}
C{\^o}me, E., Oukhellou, L., Denoeux, T., Aknin, P.: Learning from partially
  supervised data using mixture models and belief functions.
\newblock Pattern recognition \textbf{42}(3), 334--348 (2009)

\bibitem{efron1975efficiency}
Efron, B.: The efficiency of logistic regression compared to normal
  discriminant analysis.
\newblock Journal of the American Statistical Association \textbf{70}(352),
  892--898 (1975)

\bibitem{fujino2008semisupervised}
Fujino, A., Ueda, N., Saito, K.: Semisupervised learning for a hybrid
  generative/discriminative classifier based on the maximum entropy principle.
\newblock IEEE Transactions on Pattern Analysis and Machine Intelligence
  \textbf{30}(3), 424--437 (2008)

\bibitem{gilbert1969effect}
Gilbert, E.S.: The effect of unel variance-covariance matrices on fisher's
  linear discriminant function.
\newblock Biometrics pp. 505--515 (1969)

\bibitem{han1969distribution}
Han, C.P.: Distribution of discriminant function when covariance matrices are
  proportional.
\newblock The Annals of Mathematical Statistics \textbf{40}(3), 979--985 (1969)

\bibitem{hawkins1982extension}
Hawkins, D.M., Raath, E.L.: An extension of geisser's discrimination model to
  proportional covariance matrices.
\newblock The Canadian Journal of Statistics/La Revue Canadienne de Statistique
  pp. 261--270 (1982)

\bibitem{huang2010semi}
Huang, J.T., Hasegawa-Johnson, M.: Semi-supervised training of gaussian mixture
  models by conditional entropy minimization.
\newblock In: Eleventh Annual Conference of the International Speech
  Communication Association (2010)

\bibitem{joachims1999transductive}
Joachims, T., et~al.: Transductive inference for text classification using
  support vector machines.
\newblock In: Icml, vol.~99, pp. 200--209 (1999)

\bibitem{kim2007texture}
Kim, S.C., Kang, T.J.: Texture classification and segmentation using wavelet
  packet frame and gaussian mixture model.
\newblock Pattern recognition \textbf{40}(4), 1207--1221 (2007)

\bibitem{Koosowska2023microRNAdependentRO}
Kołosowska, K., Schratt, G., Winterer, J.: microrna-dependent regulation of
  gene expression in gabaergic interneurons.
\newblock Frontiers in Cellular Neuroscience \textbf{17} (2023)

\bibitem{lanckriet2004learning}
Lanckriet, G.R., Cristianini, N., Bartlett, P., Ghaoui, L.E., Jordan, M.I.:
  Learning the kernel matrix with semidefinite programming.
\newblock Journal of Machine learning research \textbf{5}(Jan), 27--72 (2004)

\bibitem{lee2014finite}
Lee, S., McLachlan, G.J.: Finite mixtures of multivariate skew t-distributions:
  some recent and new results.
\newblock Statistics and Computing \textbf{24}(2), 181--202 (2014)

\bibitem{Lucius2020}
Lucius, M., All, J.D., All, J.A.D., Belvisi, M., Radizza, L., Lanfranconi, M.,
  Lorenzatti, V., Galmarini, C.M.: Deep neural frameworks improve the accuracy
  of general practitioners in the classification of pigmented skin lesions.
\newblock Diagnostics \textbf{10} (2020)

\bibitem{lyu2023gmmsslm}
Lyu, Z., Ahfock, D., Thompson, R., McLachlan, G.J.: gmmsslm: Semi-supervised
  gaussian mixture modeling with a missing data mechanism in r.
\newblock arXiv preprint arXiv:2302.13206  (2023)

\bibitem{marks1974discriminant}
Marks, S., Dunn, O.J.: Discriminant functions when covariance matrices are
  unequal.
\newblock Journal of the American Statistical Association \textbf{69}(346),
  555--559 (1974)

\bibitem{mclachlan1975iterative}
McLachlan, G.J.: Iterative reclassification procedure for constructing an
  asymptotically optimal rule of allocation in discriminant analysis.
\newblock Journal of the American Statistical Association \textbf{70}(350),
  365--369 (1975)

\bibitem{mclachlan1975some}
McLachlan, G.J.: Some expected values for the error rates of the sample
  quadratic discriminant function1.
\newblock Australian Journal of Statistics \textbf{17}(3), 161--165 (1975)

\bibitem{mclachlan1977estimating}
McLachlan, G.J.: Estimating the linear discriminant function from initial
  samples containing a small number of unclassified observations.
\newblock Journal of the American statistical association \textbf{72}(358),
  403--406 (1977).
\newblock \doi{10.1080/01621459.1977.10481009}

\bibitem{mclachlan1989mixture}
McLachlan, G.J., Gordon, R.: Mixture models for partially unclassified data: a
  case study of renal venous renin in hypertension.
\newblock Statistics in Medicine \textbf{8}(10), 1291--1300 (1989).
\newblock \doi{10.1002/sim.4780081012}

\bibitem{mealli2015clarifying}
Mealli, F., Rubin, D.B.: Clarifying missing at random and related definitions,
  and implications when coupled with exchangeability.
\newblock Biometrika \textbf{102}(4), 995--1000 (2015)

\bibitem{mihaljevic2019classification}
Mihaljevi{\'c}, B., Benavides-Piccione, R., Bielza, C., Larra{\~n}aga, P.,
  DeFelipe, J.: Classification of gabaergic interneurons by leading
  neuroscientists.
\newblock Scientific data \textbf{6}(1), 1--6 (2019)

\bibitem{o1978normal}
O'Neill, T.J.: Normal discrimination with unclassified observations.
\newblock Journal of the American Statistical Association \textbf{73}(364),
  821--826 (1978)

\bibitem{pan2006semi}
Pan, W., Shen, X., Jiang, A., Hebbel, R.P.: Semi-supervised learning via
  penalized mixture model with application to microarray sample classification.
\newblock Bioinformatics \textbf{22}(19), 2388--2395 (2006)

\bibitem{rubin1976inference}
Rubin, D.B.: Inference and missing data.
\newblock Biometrika \textbf{63}(3), 581--592 (1976)

\bibitem{szczurek2010introducing}
Szczurek, E., Biecek, P., Tiuryn, J., Vingron, M.: Introducing knowledge into
  differential expression analysis.
\newblock Journal of Computational Biology \textbf{17}(8), 953--967 (2010)

\bibitem{szummer2001partially}
Szummer, M., Jaakkola, T.: Partially labeled classification with markov random
  walks.
\newblock Advances in neural information processing systems \textbf{14} (2001)

\bibitem{DVN/DBW86T_2018}
Tschandl, P.: {The HAM10000 dataset, a large collection of multi-source
  dermatoscopic images of common pigmented skin lesions} (2018)

\bibitem{vapnik1998support}
Vapnik, V.: The support vector method of function estimation pp. 55--85 (1998)

\bibitem{DiVolo2021}
Volo, M.D., Destexhe, A.: Optimal responsiveness and information flow in
  networks of heterogeneous neurons.
\newblock Scientific Reports \textbf{11}, 17611 (2021).
\newblock \doi{10.1038/s41598-021-96745-2}

\bibitem{2023IJMS}
{Wang}, Y., {Wang}, T.t., {Montero-Pedrazuela}, A., {Guada{\~n}o-Ferraz}, A.,
  {Rausell}, E.: {Thyroid Hormone Transporters MCT8 and OATP1C1 Are Expressed
  in Pyramidal Neurons and Interneurons in the Adult Motor Cortex of Human and
  Macaque Brain}.
\newblock International Journal of Molecular Sciences \textbf{24} (2023)

\bibitem{zhou2003learning}
Zhou, D., Bousquet, O., Lal, T., Weston, J., Sch{\"o}lkopf, B.: Learning with
  local and global consistency.
\newblock Advances in neural information processing systems \textbf{16} (2003)

\end{thebibliography}


\end{document}